\newtheorem{example}{Example}
\newcommand{\Real}{\mathbb{R}}
\newcommand{\1}{\mathbbm{1}}
\mathchardef\mhyphen="2D
\newcommand{\Nat}{\mathbb{N}}
\newcommand{\Int}{\mathbb{Z}}
\newcommand{\argmax}{\operatorname*{\mathrm{arg\,max}}}
\newcommand{\argmin}{\operatorname*{\mathrm{arg\,min}}}
\renewcommand{\d}{\mathrm{d}}
\newcommand{\Diff}[2]{\frac{\d#1}{\d#2}}
\newcommand{\Del}[2]{\frac{\partial#1}{\partial#2}}
\newcommand{\thetahat}{\hat{\theta}}
\theoremstyle{plain}
\newtheorem{theorem}{Theorem}[section]
\newtheorem{proposition}[theorem]{Proposition}
\newtheorem{fact}[theorem]{Fact}
\theoremstyle{definition}
\theoremstyle{remark}
\newcommand{\new}[1]{{\color{black} #1}}
\title{Two-Stage Predict+Optimize for Mixed Integer Linear Programs with Unknown Parameters in Constraints
}
\author{
  Xinyi Hu \\
  Department of Computer Science and Engineering\\
  The Chinese University of Hong Kong\\
  Hong Kong\\
  \texttt{xyhu@cse.cuhk.edu.hk} \\
   \And
  Jasper C.H. Lee \\
  Department of Computer Sciences \\
  Institute for Foundations of Data Science\\
  University of Wisconsin–Madison \\
  WI, USA\\
  \texttt{jasper.lee@wisc.edu} \\
  \And
  Jimmy H.M. Lee \\
  Department of Computer Science and Engineering\\
  The Chinese University of Hong Kong\\
  Hong Kong\\
  \texttt{jlee@cse.cuhk.edu.hk} \\
}
\begin{document}
\maketitle

\begin{abstract}
Consider the setting of constrained optimization, with some parameters unknown at solving time and requiring prediction from relevant features.
Predict+Optimize is a recent framework for end-to-end training supervised learning models for such predictions, incorporating information about the optimization problem in the training process in order to yield better predictions in terms of the quality of the predicted solution under the true parameters.
Almost all prior works have focused on the special case where the unknowns appear only in the optimization objective and not the constraints.
Hu et al.~proposed the first adaptation of Predict+Optimize to handle unknowns appearing in constraints, but the framework has somewhat ad-hoc elements, and they provided a training algorithm only for covering and packing linear programs.
In this work, we give a new \emph{simpler} and \emph{more powerful} framework called \emph{Two-Stage Predict+Optimize}, which we believe should be the canonical framework for the Predict+Optimize setting.
We also give a training algorithm usable for all mixed integer linear programs, vastly generalizing the applicability of the framework.
Experimental results demonstrate the superior prediction performance of our training framework over all classical and state-of-the-art methods.

\keywords{Constraint Optimization \and Machine Learning \and Predict+Optimize}
\end{abstract}


\section{Introduction}
\label{sec:introduction}
    
    
     


Optimization problems are prevalent in modern society, and yet the problem parameters are not always available at the time of solving.
\new{For example, consider the real-world application scenario of stocking a store:\ as store managers, we need to place monthly orders for products to stock in the store. 
We want to stock products that sell fast and yield high profits, as much of them as possible, subject to the hard constraint of limited storage space. 
However, orders need to be placed two weeks in advance of the monthly delivery, and the customer demand next month cannot be known exactly at the time of order placement.}
In this paper, we consider the supervised learning setting, where the unknown parameters can be predicted from relevant features, and there are sufficient historical (features, parameters) pairs as training data for a prediction model.
The goal, then, is to learn a prediction model from the training data such that, if we plug in the estimated parameters into the optimization problem and solve for an \emph{estimated solution}, the estimated solution remains a good solution even after the true parameters are revealed.


The classic approach to the problem would be to train a simple regression model, based on standard losses such as (regularized) $\ell_2$ loss, to predict parameters from the features.
It is shown, however, that having a small prediction error in the parameter space does not necessarily mean that the estimated solution performs well under the true parameters.
The recent framework of Predict+Optimize, by Elmachtoub and Grigas~\cite{elmachtoub2022smart}, instead proposes the more effective \emph{regret} loss for training, which compares the solution qualities of the true optimal solution and the estimated solution under the true parameters.
Subsequent works~\cite{demirovic2020dynamic,elmachtoub2020decision,ali2022divide,hu2022branch,mandi2022decision,mulamba2020contrastive,wilder2019melding} have since appeared in the literature, applying the framework to more and wider classes of optimization problems as well as focusing on speed-vs-prediction accuracy tradeoffs.


However, all these prior works focus only on the case where the unknown parameters appear in the optimization objective, and not in the constraints.
The technical challenge for the generalization is immediate:\ if there were unknown parameters in the constraints, the estimated solution might not even be feasible under the true parameters revealed afterwards!
Thus, in order to tackle the Predict+Optimize setting with unknowns in constraints, the recent work of Hu et al.~\cite{hu2022Packing} presents the first such adaptation \new{on the framework}:\ 
\new{they view the estimated solution as representing a \emph{soft commitment}.
Once the true parameters are revealed, corrective action can be taken to ensure feasibility, potentially at a penalty corresponding to the real-life cost of (partially) reneging on a soft commitment.}
\new{Their framework captures application scenarios whenever such correction is possible, and requires the practitioner to specify both the correction mechanism and the penalty function.
These data can be determined and derived from the specific application scenario.
}
As an example, in the product-stocking problem, an additional unknown parameter is the storage space, because it depends on how the current products in the store sell before the new order arrives.
{\color{black}We need to place orders two weeks ahead based on predicted storage space.
The night before the order arrives, we know the precise available space, meaning that the unknown parameter is revealed.}
A possible correction mechanism then is to throw away excess products that the store cannot keep, while incurring the penalty that is the retail price of the products, as well as disposal fees.


While the Hu et al.~\cite{hu2022Packing} framework does capture many application scenarios, there are important shortcomings.
In their framework, they require the practitioner to specify a correction function that amends an infeasible solution into a feasible solution.
However, the derivation of a correction function can be rather ad-hoc in nature.
In particular, given an infeasible estimated solution, there may be many ways to transform the solution into a feasible one, and yet their framework requires the practitioner to pick one particular way.
This leads to the second downside:\ it is difficult to give a \emph{general} algorithmic framework that applies to a wide variety of optimization problems.
Hu et al.~had to restrict their attention only to packing and covering linear programs, for which they could propose a generic correction function.
In this work, we aim to \emph{vastly generalize} the kinds of optimization problems that Predict+Optimize can tackle under uncertainty in the constraints.  In addition, the approach of Hu et al.~fails to handle the interesting situation in which \new{post-hoc} correction is still desirable when the estimated solution is feasible but not good under the true parameters.

Our contributions are three-fold:

\textbf{$\bullet$} To mitigate the shortcomings of the prior work, we propose and advocate a new framework, which we call \emph{Two-Stage Predict+Optimize}\footnote{{\color{black}
The literature sometimes uses ``two-stage" to mean approaches where the prediction is agnostic to the optimization problem.
Here, ``two-stage" refers to the soft commitment and the correction.}},
that is both \emph{conceptually simpler} and \emph{more \new{expressive}} in terms of the class of optimization problems it can tackle.
The key idea for the new framework is that the correction function is unnecessary.
All that is required is a penalty function that captures the cost of modifying one solution to another.
A penalty function is sufficient for defining a correction process:\ we formulate the correction process itself as a ``{\color{black} Stage 2}'' optimization problem, taking the originally estimated solution as well as the penalty function into account.

\textbf{$\bullet$} Under this framework, we further propose a general end-to-end training algorithm that applies not only to packing and covering linear programs, but also to all mixed integer linear programs (MILPs).
We adapt the approach of Mandi and Guns~\cite{mandi2020interior} to give a gradient method for training neural networks to predict parameters from features.

\textbf{$\bullet$} We apply the proposed method to three benchmarks to demonstrate the superior empirical performance over classical and state-of-the-art training methods.


\section{Background}
\label{sec:Background}

In this section, we give basic definitions for optimization problems and the Predict+Optimize setting \cite{elmachtoub2022smart}, and describe the state-of-the-art framework \cite{hu2022Packing} for Predict+Optimize with unknown parameters in constraints.
The theory is stated in terms of minimization but applies of course also to maximization, upon appropriate negation.
Without loss of generality, an \emph{optimization problem} (OP) $P$ can be defined as finding:
\begin{equation*}
    x^* = \underset{x}{\arg\min}\ obj(x) \quad \text{ s.t. } C(x)
\end{equation*}
where $x \in \mathbb{R}^d$ is a vector of decision variables, $obj:
\mathbb{R}^d \rightarrow \mathbb{R}$ is a function mapping $x$ to a real objective value that is to be minimized, and $C$ is a set of constraints that must be satisfied over $x$.
We call $x^*$ an \emph{optimal solution} and $obj(x^*)$ the \emph{optimal value}.
A \emph{parameterized optimization problem (Para-OP)} $P(\theta)$ is an extension of an OP $P$: 
\begin{equation*}
    x^*(\theta) = \underset{x}{\arg\min}\ obj(x, \theta) \quad \text{ s.t. } C(x, \theta)
\end{equation*}
where $\theta \in \mathbb{R}^t$ is a vector of parameters.
The objective $obj(x, \theta)$ and constraints $C(x, \theta)$ can both depend on $\theta$.
When the parameters are known, a Para-OP is just an OP.

In the \emph{Predict+Optimize} setting \cite{elmachtoub2022smart}, the true parameters $\theta \in \Real^t$ for a Para-OP are not known at solving time, and \emph{estimated parameters} $\hat{\theta}$ are used instead.
Suppose each parameter is estimated by $m$ features.
The estimation will rely on a machine learning model trained over $n$ observations of a training data set $\{(A^1, \theta^1), \dots, (A^n, \theta^n) \}$ where $A^i \in \Real^{t \times m}$ is a \emph{feature matrix} for $\theta^i$, in order to yield a \emph{prediction function} $f:\mathbb{R}^{t \times m} \rightarrow \mathbb{R}^t$ predicting parameters $\hat{\theta} = f(A)$.

Solving the Para-OP using the estimated parameters, we obtain an \emph{estimated solution} $x^*(\hat{\theta})$.
When the unknown parameters appear in constraints, one major challenge is that the feasible region is only approximated at solving time, and hence the estimated solution may be infeasible under the true parameters. 
Fortunately, \new{in certain applications, the estimated solution is not a hard commitment, but only represents a soft commitment that can be modified once the true parameters are revealed.}
Hu et al.~\cite{hu2022Packing} propose a Predict+Optimize framework for such applications.
The framework is as follows:\ i) the unknown parameters are estimated as $\thetahat$, and an estimated solution $x^*(\thetahat)$ is solved using the estimated parameters, ii) the true parameters $\theta$ are revealed, and if $x^*(\thetahat)$ is infeasible under $\theta$, it is \emph{amended} into a \emph{corrected solution} $x^*_\mathrm{corr}(\thetahat,\theta)$ while potentially incurring some \emph{penalty}, and finally iii) the solution $x^*_\mathrm{corr}(\thetahat,\theta)$ is evaluated according to the sum of both the objective, under the true parameters $\theta$, and the incurred penalty from correction.

More formally, a \emph{correction function} takes an estimated solution $x^*(\hat{\theta})$ and true parameters $\theta$ and returns a \emph{corrected solution} $x^*_{corr}(\hat{\theta},\theta)$ that is feasible under $\theta$.
A \emph{penalty function} $Pen(x^*(\hat{\theta}) \to x^*_{corr}(\hat{\theta},\theta))$ takes an estimated solution $x^*(\hat{\theta})$ and the corrected solution $x^*_{corr}(\hat{\theta},\theta)$ and returns a non-negative penalty.
Both the correction function and the penalty function should be chosen according to the precise application scenario at hand.
The final corrected solution $x^*_{corr}(\thetahat,\theta)$ is evaluated using the \emph{post-hoc regret}, which is defined with respect to the corrected solution $x^*_{corr}(\thetahat, \theta)$ and the penalty function $Pen(x^*(\hat{\theta}) \to x^*_{corr}(\hat{\theta},\theta))$.
The post-hoc regret is the sum of two terms:\ (a) the difference in objective between the \emph{true optimal solution} $x^*(\theta)$ and the corrected solution $x^*_{corr}(\thetahat, \theta)$ under the true parameters $\theta$, and (b) the penalty that the correction process incurs.
Mathematically, the post-hoc regret function $PReg(\thetahat, \theta):\ \mathbb{R}^t \times \mathbb{R}^t \rightarrow \mathbb{R}_{\geq 0}$ (for minimization problems) is:
\begin{equation}
    PReg(\hat{\theta},\theta) = \ obj(x^{*}_{corr}(\hat{\theta},\theta), \theta) - obj (x^*(\theta), \theta)\ +\; Pen(x^*(\hat{\theta}) \to x^*_{corr}(\hat{\theta},\theta))
    \label{eq:CReg_func}
\end{equation}
where $obj(x^{*}_{corr}(\hat{\theta},\theta), \theta)$ is the \emph{corrected optimal value} and $obj (x^*(\theta), \theta)$  is the \emph{true optimal value}.

Given the post-hoc regret as a loss function, the empirical risk minimization principle dictates that we choose the prediction function to be the function $f$ from the set of models $\mathcal{F}$ attaining the smallest average post-hoc regret over the training data:
\begin{equation}
    f^* = \underset{f \in \mathcal{F}}{\arg\min}\  \frac{1}{n}\sum^{n}_{i=1}
    PReg(f(A^i), \theta^i) 
    \label{eq:ERM}
\end{equation}

\section{Two-stage Predict+Optimize Framework}
\label{sec:framework}

While the prior work by Hu et al.~\cite{hu2022Packing} is the first Predict+Optimize framework for unknowns in constraints, and is indeed applicable to a good range of applications, it has several shortcomings.
First, the framework requires mathematically formalizing both a penalty function and a correction function from the application scenario, and essentially imposes differentiability assumptions on the correction function for the framework to be usable.
The ad-hoc nature of writing down a correction function limits the practical applicability of the framework.
Second, as a result of needing a single (differentiable) correction function, Hu et al.~\cite{hu2022Packing} needed to restrict their attention to only packing and covering linear programs, in order to derive a general correction function that is applicable to all the instances.
This also significantly limits the immediate applicability of their framework.
Third, their framework only corrects an estimated solution when it is infeasible under the true parameters.
Yet, there are applications where corrections are possible even when the estimated solution were feasible, but just not very good under the true parameters.

In this paper, we advocate using a simpler yet more powerful framework, which we call \emph{Two-Stage Predict+Optimize}, addressing all of the above shortcomings.
The simplified perspective will allow us to discuss more easily how to handle the entire class of mixed integer linear programs (MILPs) instead of being restricted to just packing and covering linear programs.
Since {\color{black}MILP}s include all optimization problems in $\NP$ (under a reasonable definition of $\NP$ for optimization problems), our framework is significantly more applicable in practice.
We will describe the Two-Stage Predict+Optimize framework below, and discuss its application to MILPs in the next section.

Our framework is simple:\ we forgo the idea of a correction function and treat correction itself as an optimization problem, based on the penalty function, the estimated solution and the revealed true parameters.
\new{Recall the Hu et al.~view of Predict+Optimize under uncertainties in constraints:\ the estimated solution is a form of soft commitment, which can be modified at a cost once the true parameters are revealed.}
The penalty function describes the \new{cost} of changing from an estimated solution to a final solution.
The main observation is that, given an estimated solution \new{and the revealed parameters}, we should in fact solve a \emph{new} optimization problem, \new{formed by 
applying the true parameters to the original optimization, and adding the penalty function to the objective}.
The final solution from this new optimization thus takes the penalty of correction into account.
This approach yields three immediate advantages.
First, the practitioner no longer needs to specify a correction function, thus reducing the ad-hoc nature of the framework.
Second, even feasible solutions are allowed to be modified after the true parameters are revealed if the penalty of doing so is not infinity.
Third, conditioned on the same penalty function, the solution quality from our two-stage optimization approach is always at least as good as that from using any correction function.
The last advantage is presented as \Cref{prop:two-stage}.

\new{Now we formally define the Two-Stage Predict+Optimize framework.}




\textbf{I.} In Stage 1, the unknown parameters are estimated as $\thetahat$ from features. The practitioner then solves the \emph{{\color{black} Stage 1}} optimization, which is the Para-OP using the estimated parameters, to obtain the \emph{{\color{black} Stage 1} solution} ${\color{black}x^*_1}$.
\new{The Stage 1 solution should be interpreted as some form of soft commitment, that we get to modify in {\color{black} Stage 2} at extra cost/penalty.}
Assuming the notation of the Para-OP in \Cref{sec:Background}, the {\color{black} Stage 1} OP can be formulated as:
\begin{equation*}
\begin{aligned}
    {\color{black}x^*_1} &\ = \argmin_{x}\ obj(x,\hat{\theta})
    \quad \text{ s.t. } C(x,\hat{\theta})
\end{aligned}
\end{equation*}
\textbf{II.} 
\new{At the beginning of Stage 2, the true parameters $\theta$ are revealed. 
The Stage 2 optimization problem augments the original Stage 1 problem by adding a penalty term $Pen({\color{black}x^*_1} \to {\color{black}x^*_2}, \theta)$ to the objective, which accounts for the penalty (modelled from the application scenario) for changing from the softly-committed Stage 1 solution ${\color{black}x^*_1}$ to the new \emph{Stage 2} and \emph{final} solution ${\color{black}x^*_2}$. }
The {\color{black} Stage 2} OP can then be formulated as:
\begin{equation*} 
\begin{aligned}
    {\color{black}x^*_2} \ = \argmin_{x}\ obj(x,\theta) +  Pen({\color{black}x^*_1} \to x, \theta)\ \quad \text{ s.t. } C(x,\theta)
\end{aligned}
\end{equation*}
{\color{black}Solving the Stage 2 problem yields the final Stage 2 ``corrected'' solution ${\color{black}x^*_2}$.
}

\textbf{III.} The {\color{black} Stage 2} solution ${\color{black}x^*_2}$ is evaluated according to the analogous post-hoc regret, as follows:
\begin{equation*}
    PReg(\hat{\theta},\theta) = \ obj({\color{black}x^*_2},\theta) +  Pen({\color{black}x^*_1} \to {\color{black}x^*_2}, \theta)- obj(x^*(\theta),\theta)
\label{eq:new_loss}
\end{equation*}
where again, $x^*(\theta)$ is an optimal solution of the Para-OP under the true parameters $\theta$.
\new{Note that the post-hoc regret depends on \emph{all} of a) the predicted parameters, b) the induced Stage 1 solution, c) the true parameters and d) the final Stage 2 solution.}

\new{
To see this new framework applies in practice, the following example expands on the product-stocking problem in the introduction.}

\begin{example}
\label{example:knapsack}
\color{black}
Consider the product-stocking problem again, where regular orders have to be placed two weeks ahead of monthly deliveries.
Since the available space at the time of delivery is unknown when we place the regular orders, depending on the sales over the next two weeks, we need to make a prediction on the available space to make a corresponding order.
We learn the predictor using historical sales records from features such as time-of-year and price. 
Then, we use the predicted available space to optimize for the regular order we place. 
This is the Stage 1 solution.

The night before the order arrives, the unknown constraint parameter, i.e.\ the precise available space, is revealed. 
We can then check if we have over-ordered or under-ordered. 
In the case of over-ordering, we would have to call and ask the wholesale company to drop some items from the order. 
The company would perhaps allow taking the items off the final bill, but naturally they have a surcharge for last-minute changes. 
Similarly, if we under-ordered, we might request the wholesale company to send us more products, again naturally with a surcharge for last-minute ordering. 
The updated order is the Stage 2 decision. 
The incurred wholesaler surcharges induce the penalty function.
\end{example}


\new{
A reader who is familiar with the literature on two-stage optimization problems may note that the above framework is phrased slightly differently from some other two-stage problem formulations.
In particular, some two-stage frameworks phrase Stage 1 solutions as \emph{hard} commitments, and include \emph{recourse variables} in both stages of optimization to model what changes are made in Stage 2.
We show in \Cref{app:recourse} how our framework can capture this other perspective, and in general discuss how problem modelling can be done in our new framework.

The reader may also wonder:\ what about application scenarios where the (Stage 1) estimated solution is a hard commitment, and there is absolutely no correction/recourse available?
In \Cref{app:no_correction}, we discuss how our framework is \emph{still} useful and applicable for learning in these situations.

We also give a more detailed comparison, in \Cref{app:new_vs_old}, between our new Two-Stage Predict+Optimize framework and the prior Hu et al.~framework.
Technically, if we \emph{ignored} differentiability issues, the two frameworks are mathematically equivalent in expressiveness.
However, we stress that our new framework is both \emph{conceptually simpler} and \emph{easier to apply} to a \emph{far wider} class of optimization problems.
We show concretely in the next section how to end-to-end train a neural network for this framework for all MILPs, vastly generalizing the method of Hu et al.\ which is restricted to packing and covering (non-integer) linear programs.
In addition, \Cref{app:new_vs_old} also states and proves \Cref{prop:two-stage}, that if we fix an optimization problem, a prediction model and a penalty function, then the solution quality from our two-stage approach is always at least as good as using the correction function approach.
}

\section{Two-Stage Predict+Optimize on MILPs}
\label{sec:PPO4MILP}

In this section, we describe how to give an end-to-end training method for neural networks to predict unknown parameters from features, under the Two-Stage Predict+Optimize framework.
The following algorithmic method is applicable whenever \emph{both} stages of optimization are expressible as {\color{black}MILP}s.
Due to the page limit, the discussion in this section is high-level and brief, with all the calculation details deferred to \Cref{app:MILP}.

The standard way to train a neural network is to use a gradient-based method.
In the Two-Stage Predict+Optimize framework, we use the post-hoc regret $PReg$ as the loss function.
Therefore, for each edge weight $w_e$ in the neural network, we need to compute the derivative $\frac{\d PReg}{\d w_e}$.
Using the law of total derivative, we get
\begin{equation}
\Diff{PReg(\hat{\theta},\theta)}{w_e} = \left.\frac{\partial PReg(\hat{\theta},\theta)}{\partial {\color{black}x^*_2}}\right|_{{\color{black}x^*_1}} \frac{\partial {\color{black}x^*_2}}{\partial {\color{black}x^*_1}} \frac{\partial {\color{black}x^*_1}}{\partial \hat{\theta}} \frac{\partial \hat{\theta}}{\partial w_{e}} + \left.\frac{\partial PReg(\hat{\theta},\theta)}{\partial {\color{black}x^*_1}}\right|_{{\color{black}x^*_2}} \frac{\partial {\color{black}x^*_1}}{\partial \hat{\theta}} \frac{\partial \hat{\theta}}{\partial w_{e}}
\label{eq:new_chainRule}
\end{equation}
As such, we wish to calculate each term on the right hand side.

The easiest term to handle is $\frac{\partial \hat{\theta}}{\partial w_{e}}$, since $\thetahat$ is the neural network output, and so the derivatives can be directly calculated by standard backpropagation~\cite{rumelhart1986learning}.
As for the terms $\left.\frac{\partial PReg(\hat{\theta},\theta)}{\partial {\color{black}x^*_2}}\right|_{{\color{black}x^*_1}}$ and $\left.\frac{\partial PReg(\hat{\theta},\theta)}{\partial {\color{black}x^*_1}}\right|_{{\color{black}x^*_2}}$, they are easily calculable whenever both the optimization objective and penalty function are smooth, and in fact linear \new{as} in the case of {\color{black}MILP}s.
What remains are the terms $\frac{\partial {\color{black}x^*_2}}{\partial {\color{black}x^*_1}}$ and $\frac{\partial {\color{black}x^*_1}}{\partial \hat{\theta}}$.
The challenge is that ${\color{black}x^*_2}$ is the solution of a MILP optimization ({\color{black} Stage 2}) that uses ${\color{black}x^*_1}$ as its parameters,
{\color{black} i.e., differentiate through a MILP.}
Similarly, ${\color{black}x^*_1}$ depends on $\thetahat$ through a MILP ({\color{black} Stage 1}).
\new{Since MILP optima may not change under minor parameter perturbations}, the gradients can be either 0 or non-existent, \new{which are uninformative.
We \new{thus} need to compute some approximation in order to get useful training signals.

Our approach, inspired by the work of Mandi and Guns~\cite{mandi2020interior}, is to define a new surrogate loss function $\widetilde{PReg}$ that is differentiable and produces informative gradients.
{\color{black}Prior works related to learning unknowns in constraints \cite{agrawal2019differentiable, amos2017optnet, wilder2019melding} give ways of differentiating through LPs or LPs with regularizations.
These works can be used in place of the proposed approach.
However, experiments in Appendix \ref{app:Add_exp_ks} demonstrate that the proposed approach performs at least as well in post-hoc regret performance as the others, while being faster.}
We show the construction of the proposed approach below, and note that it does not have a simple closed form.
Nonetheless, we can compute its gradients.}

\new{The rest of the section assumes} that both stages of optimization are expressible as a MILP in the following standard form:
\begin{equation}
    x^* = \argmin_x c^\top  x \  \text{ s.t. } Ax = b, Gx\geq h, x \geq 0, x_S \in \Int
\label{eq:MILP}
\end{equation}
with decision variables $x \in \mathbb{R}^d$ and problem parameters $c \in \mathbb{R}^d$, $A \in \mathbb{R}^{p \times d}$, $b \in \mathbb{R}^p$, $G \in \mathbb{R}^{q \times d}$, $h \in \mathbb{R}^q$.
The subset of indices $S$ denotes the set of variables that are under integrality constraints.
Since the unknown parameters may \new{appear in} any combination of $c, A, b, G$ and $h$ \new{in the Stage 1 optimization for $x^*_1$}, \new{the surrogate loss function construction needs computable and informative gradients} for all of $\frac{\partial x^*}{\partial c}$, $\frac{\partial x^*}{\partial A}$, $\frac{\partial x^*}{\partial b}$, $\frac{\partial x^*}{\partial G}$ and $\frac{\partial x^*}{\partial h}$.

We follow the interior-point based approach of Mandi and Guns \cite{mandi2020interior}, used also by Hu et al.~\cite{hu2022Packing}. 
Consider the following convex relaxation of (\ref{eq:MILP}), for a \emph{fixed} value of $\mu \ge 0$:
\begin{equation}
x^* = \argmin _{x,s} c^\top  x - \mu \sum_{i=1}^d \ln(x_i) - \mu \sum_{i=1}^q \ln(s_i)  \text{ s.t. } Ax = b, Gx - s = h
\label{eq:logbarrier_relaxation}
\end{equation}
This is a relaxation of (\ref{eq:MILP}) by i) dropping all integrality constraints, ii) introducing slack variables $s \ge 0$ to turn $Gx \ge h$ into $Gx - s = h$ and iii) replacing both the $x \ge 0$ and $s \ge 0$ constraints with the logarithm barrier terms in the objective, with multiplier $\mu \ge 0$.
\new{The observation is that the gradients $\frac{\partial x}{\partial c}$, $\frac{\partial x}{\partial A}$, $\frac{\partial x}{\partial b}$, $\frac{\partial x}{\partial G}$ and $\frac{\partial x}{\partial h}$ for (\ref{eq:logbarrier_relaxation}) are all well-defined, computable and informative for a \emph{fixed} value of $\mu \ge 0$:
Slater's condition holds for (\ref{eq:logbarrier_relaxation}), and so the KKT conditions must be satisfied at the optimum $(x^*,s^*)$ of (\ref{eq:logbarrier_relaxation}).
We can thus compute all the relevant gradients via differentiating the KKT conditions, using the implicit function theorem.
We give all the calculation details in \Cref{app:MILP}.}

\new{
Given the above observation, we then aim to construct the surrogate loss function by replacing the $x^*_1$ and $x^*_2$, which are supposed to solved using MILP (\ref{eq:MILP}), with a) $\widetilde{x}_1$ that is solved from program (5) relaxation of the Stage 1 optimization problem, using the predicted parameters $\hat{\theta}$ and b) $\widetilde{x}_2$ that is solved from the program (5) relaxed version of Stage 2 optimization, using $\widetilde{x}_1$ and the true parameters $\theta$.
The only remaining question then, is, which values of $\mu$ do we use for the two relaxed problems?

Given a MILP in the form of (\ref{eq:MILP}), the interior-point based solver of Mandi and Guns~\cite{mandi2020interior} generates and solves (\ref{eq:logbarrier_relaxation}) for a sequence of decreasing non-negative $\mu$, with a termination condition that $\mu$ cannot be smaller than some cutoff value.
Thus, we simply choose the cutoff value to use as ``$\mu$'' in (\ref{eq:logbarrier_relaxation}), which then completes the definition of the surrogate loss $\widetilde{PReg}$.
}

\new{Algorithmically, we train the neural network on the surrogate loss $\widetilde{PReg}$ as follows:
 given predicted parameters, we run the Mandi and Guns solver
to get the optimal solution $(x^*,s^*)$ for the final value of $\mu$.
We can then compute the gradient of the output solution with respect to any of the problem parameters using the calculations in \Cref{app:MILP}, combined with backpropagation, to yield $\frac{\d \widetilde{PReg}}{\d w_e}$ according to Equation (\ref{eq:new_chainRule}).
}

In \Cref{app:problems}, we give three example application scenarios, along with their penalty functions, that our training approach can handle.
These problems are:\ a) an alloy production problem, for factory trying to source ores under uncertainty in chemical compositions in the raw materials, b) a variant of the classic 0-1 knapsack with unknown weights and rewards, and c) a nurse roster scheduling problem with unknown patient load.
We show explicitly in \Cref{app:problems} how both stages of optimization can be formulated as MILPs for these applications, and apply the \Cref{app:MILP} calculations to yield gradient computation formulas \new{for the surrogate loss $\widetilde{PReg}$} for these problems.

{\color{black}
A limitation of our approach is the requirement that both stages must be expressible as MILPs, constraining the optimization objectives to be linear in the MILP decision variables.
This contrasts the Hu et al.~framework~\cite{hu2022Packing} which handles non-linear penalties.
We point out that even MILPs can handle some non-linearity by using extra decision variables:~for example, the absolute-value function.
Moreover, the Appendix \ref{app:MILP} gradient calculations can be adapted to handle general differentiable non-linear objectives. 
We present only MILPs as a main overarching application for this paper because of their widespread use in discrete optimization, with readily available solvers.}

\section{Experimental Evaluation}
\label{sec:experiments}

\begin{table}
\caption{Relevant problem sizes of the three benchmarks.}
\label{table:problems_info}
\vskip 0.1in
\centering
\resizebox{0.95\textwidth}{!}{
\begin{tabular}{l||c|c|c|c}
\hline
Problem name                       & Brass alloy production & Titanium-alloy production & 0-1 knapsack & Nurse scheduling problem \\ \hline \hline
Dimension of x                     & 10                     & 10                        & 10           & 315                      \\ \hline
Number of constraints              & 12                     & 14                        & 21           & 846                      \\ \hline
Number of unknown parameters       & 20                     & 40                        & 10           & 21                       \\ \hline
Number of features (per parameter) & 4096                   & 4096                      & 4096         & 8                       \\ \hline
\end{tabular}}
\end{table}

We evaluate the proposed method\footnote{Our implementation is available at \href{https://github.com/Elizabethxyhu/NeurIPS_Two_Stage_Predict-Optimize }{https://github.com/Elizabethxyhu/NeurIPS\_Two\_Stage\_Predict-Optimize}}
on three benchmarks described in \Cref{sec:PPO4MILP} \new{and \Cref{app:problems}}.  {\color{black}
Table \ref{table:problems_info} reports the relevant benchmark problem sizes.}
We compare our method (2S) with the state of the art Predict+Optimize method, IntOpt-C~\cite{hu2022Packing}, and $5$ classical regression methods~\cite{friedman2001elements}:~ridge regression (Ridge), $k$-nearest neighbors ($k$-NN), classification and regression tree (CART), random forest (RF), and neural network (NN).
All of these methods use their classic loss function to train the prediction models.
At test time, to ensure the feasibility of the solutions when computing the post-hoc regret, we perform {\color{black} Stage 2} optimization on the estimated solutions for these classical regression methods before evaluating the final solution.
{\color{black} Additionally, CombOptNet~\cite{paulus2021comboptnet} is a different method focusing on learning unknowns in constraints, but with a different goal and loss function. 
We experimentally compare our proposed method with CombOptNet on the 0-1 knapsack benchmark---the only with available CombOptNet code.
We also present a qualitative comparison in Section~\ref{sec:Literature}.}

{In the following experiments, we will need to take care to distinguish two-stage optimization as a training technique (\Cref{sec:PPO4MILP}) and as an evaluation framework (\Cref{sec:framework}).
We will denote our training method as ``2S'' in the experiments, and when we say ``Two-Stage Predict+Optimize'' framework, we always mean it as an evaluation framework.
2S is always evaluated according to the Two-Stage Predict+Optimize framework.
As explained above, we will also evaluate all the classical training methods using the Two-Stage Predict+Optimize framework.
For our comparison with the prior work of Hu et al.~\cite{hu2022Packing}, we will also distinguish their training method and evaluation framework.
The name ``IntOpt-C'' always refers to their training method using their correction function.
We will simply call their evaluation framework the ``Hu et al.~framework'' or with similar phrasing (see \Cref{sec:Background} to recall details).
IntOpt-C will sometimes be evaluated using our new Two-Stage Predict+Optimize framework, and sometimes the prior framework of Hu et al.~\cite{hu2022Packing} using their correction function.
}

The methods of $k$-NN, RF, NN, and IntOpt-C as well as 2S have hyperparameters, which we tune via cross-validation.
{\color{black}We include the hyperparameter types and chosen values in \Cref{app:Hyperparameters}.
In the main paper we only report the prediction performances.
See \Cref{app:runtime} for runtime comparisons.}

\paragraph{Alloy Production Problem}
The alloy production problem is a covering LP, \new{see \Cref{app:alloy} for the practical motivation and LP model.}
Since Hu et al.~\cite{hu2022Packing} also experimented on this problem, we use it to compare our 2S method with IntOpt-C~\cite{hu2022Packing}, \new{using} the same dataset and experimental setting.

We conduct experiments on the production of two real 
alloys:\ brass and an alloy blend for strengthening Titanium.
For brass, $2$ kinds of metal materials, Cu and Zn, are required~\cite{kabir2010study}.
The \new{blend} of the two materials are, proportionally, $req=[627.54, 369.72]$.
For the titanium-strengthening alloy, $4$ kinds of metal materials, C, Al, V, and Fe, are required~\cite{kahraman2005joining}.
The \new{blend} of the four materials are proportional to $req=[0.8, 60, 40, 2.5]$.
We use the same real data as that used in IntOpt-C \cite{hu2022Packing} as numerical values in our experiment instances.
In this dataset \cite{paulus2021comboptnet}, each unknown metal concentration is related to 4096 features.
For experiments on both alloys, 350 instances are used for training and 150 instances for testing the model performance.
For NN, IntOpt-C, and 2S, we use a 5-layer fully connected network with 512 neurons per hidden layer.

In the penalty function described in \Cref{app:alloy}, we need to choose a penalty factor/multiplier for each supplier.
We conduct experiments on $6$ types of penalty factor ($\sigma$) settings:
6 vectors where each entry is i.i.d.~uniformly sampled from $[0.25\pm 0.015], [0.5 \pm 0.015], [1.0 \pm 0.015]$, $[2.0\pm 0.015]$, $[4.0\pm 0.015]$, and $[8.0\pm 0.015]$ respectively.
This random sampling of $\sigma$ ensures that the penalty factor for each supplier is different, but remains roughly on the same scale.

\begin{table}
\caption{Comparison of the Two-Stage Predict+Optimize framework and the Hu et al. framework on the alloy production problem.}
\label{table:FrameworkCompare}
\vskip 0.1in
\centering
\resizebox{0.45\textwidth}{!}{
\begin{tabular}{l|l||c|c}
\hline
\multicolumn{2}{c||}{PReg}                               & \multirow{2}{*}{\makecell{Two-Stage Predict\\+Optimize framework}} & \multirow{2}{*}{\makecell{Hu et al.\\Framework}}       \\ \cline{1-2}
Alloy                  & Penalty factor    &                           &                     \\ \hline
\multirow{6}{*}{Brass}    & 0.25±0.015                         & \textbf{43.87±2.73}                                                          & 68.16±6.26                                                                         \\
                          & 0.5±0.015                          & \textbf{65.71±4.81}                                                          & 82.91±5.45                                                                         \\
                          & 1±0.015                            & \textbf{88.75±5.91}                                                          & 107.64±6.85                                                                        \\
                          & 2±0.015                            & \textbf{123.90±6.84}                                                         & 150.47±12.99                                                                       \\
                          & 4±0.015                            & \textbf{161.86±8.49}                                                         & 178.69±10.09                                                                       \\
                          & 8±0.015                            & \textbf{194.06±13.09}                                                        & 206.84±12.51                                                                       \\ \hline
\multirow{6}{*}{Titanium-alloy} & 0.25±0.015                         & \textbf{4.52±0.47}                                                           & 6.45±0.81                                                                          \\
                          & 0.5±0.015                          & \textbf{6.03±0.62}                                                           & 7.90±0.56                                                                          \\
                          & 1±0.015                            & \textbf{8.58±0.74}                                                           & 10.73±0.81                                                                         \\
                          & 2±0.015                            & \textbf{12.17±1.24}                                                          & 14.17±1.31                                                                         \\
                          & 4±0.015                            & \textbf{16.10±1.06}                                                          & 17.48±0.99                                                                         \\
                          & 8±0.015                            & \textbf{19.69±0.91}                                                          & 21.08±1.91        \\ \hline
\end{tabular}}
\end{table}

\begin{table*}
\caption{Mean post-hoc regrets and standard deviations for the alloy production problem using the Two-Stage Predict+Optimize framework.}
\label{table:alloy_2SPPO}
\vskip 0.1in
\centering
\resizebox{1\textwidth}{!}{
\begin{tabular}{l|l||c|c|c|c|c|c|c||c}
\hline
\multicolumn{2}{c||}{PReg}                & \multirow{2}{*}{2S}                & \multirow{2}{*}{IntOpt-C} & \multirow{2}{*}{Ridge}       & \multirow{2}{*}{$k$-NN}        & \multirow{2}{*}{CART}         & \multirow{2}{*}{RF}          & \multirow{2}{*}{NN}    & \multirow{2}{*}{TOV}               \\ \cline{1-2}
Alloy                  & Penalty factor    &                           &                        &                       &                       &                     &                     &                              \\
\hline
\multirow{6}{*}{Brass}    & 0.25±0.015                         & \textbf{43.87±2.73}   & 45.27±3.35                & 60.80±2.55             & 63.32±4.39            & 77.80±6.37            & 60.85±2.35          & 64.96±3.58          & \multirow{6}{*}{312.02±6.94} \\
                          & 0.5±0.015                          & \textbf{65.71±4.81}   & 67.69±4.25                & 71.12±3.48             & 74.36±5.69            & 93.67±7.03            & 70.86±3.29          & 74.32±2.90          &                              \\
                          & 1±0.015                            & \textbf{88.75±5.91}   & 89.83±4.79                & 91.82±6.41             & 96.52±8.90            & 125.50±9.49           & 90.97±6.14          & 93.12±4.24          &                              \\
                          & 2±0.015                            & \textbf{123.90±6.84}  & 125.46±9.26               & 133.18±12.98           & 140.77±16.02          & 189.12±16.10          & 131.12±12.48        & 130.67±10.52        &                              \\
                          & 4±0.015                            & \textbf{161.86±8.49}  & 164.94±10.33              & 215.87±26.54           & 229.22±30.74          & 316.31±30.95          & 211.40±25.56        & 205.76±24.33        &                              \\
                          & 8±0.015                            & \textbf{194.06±13.09} & 200.42±8.51               & 381.30±53.75           & 406.19±60.42          & 570.75±61.42          & 372.01±51.82        & 355.96±52.25        &                              \\
\hline
\multirow{6}{*}{Titanium-alloy} & 0.25±0.015                         & \textbf{4.52±0.47}    & 4.72±0.58                 & 6.43±0.39              & 6.13±0.34             & 7.07±0.45             & 5.75±0.48           & 6.56±0.59           & \multirow{6}{*}{30.27±0.54}  \\
                          & 0.5±0.015                          & \textbf{6.03±0.62}    & 6.23±0.64                 & 7.71±0.45              & 7.27±0.39             & 8.57±0.45             & 6.76±0.55           & 7.38±0.67           &                              \\
                          & 1±0.015                            & \textbf{8.58±0.74}    & 8.71±0.95                 & 10.26±0.62             & 9.55±0.52             & 11.57±0.52            & 8.76±0.72           & 9.03±0.84           &                              \\
                          & 2±0.015                            & \textbf{12.17±1.24}   & 12.31±1.31                & 15.37±1.03             & 14.11±0.84            & 17.57±0.80            & 12.78±1.11          & 12.34±1.21          &                              \\
                          & 4±0.015                            & \textbf{16.10±1.06}   & 16.97±1.70                & 25.60±1.89             & 23.24±1.56            & 29.57±1.53            & 20.81±1.93          & 18.95±2.00          &                              \\
                          & 8±0.015                            & \textbf{19.69±0.91}   & 20.80±1.74                & 46.04±3.65             & 41.49±3.03            & 53.57±3.10            & 36.88±3.63          & 32.16±3.60          & \\                    
\hline                          
\end{tabular}}
\end{table*}

{
The first experiment 
we run compares 2S+Two-Stage Predict+Optimize framework with IntOpt-C+Hu et al.~framework.
Specifically, we compare a) using 2S for training and evaluating using the Two-Stage Predict+Optimize framework in \Cref{sec:framework}, versus b) using IntOpt-C for training and evaluating using the same correction function from training, according to the Hu et al.~framework described in \Cref{sec:Background}.
}
\Cref{table:FrameworkCompare} compares the mean post-hoc regret and standard deviations for the alloy production problem for the two different frameworks.
The table shows that Two-Stage Predict+Optimize framework always achieves smaller mean post-hoc regret than the Hu et al.~framework. 
Compared with the Hu et al.~framework, our framework obtains 6.18\%-35.63\% smaller mean post-hoc regret in brass production, and 6.59\%-29.89\% smaller mean post-hoc regret in titanium-alloy production.

{\color{black}
We present a further comparison in \Cref{app:framework_comparison} with a variant of the Hu et al.~framework---the $\ell_2$ projection idea in~\cite{chen2021enforcing}, which performs even worse than the Hu et al.~framework.
}

The second experiment compares various training approaches \emph{all} evaluated under the Two-Stage Predict+Optimize framework.
That is, the models are trained differently, but at test time, we always use {\color{black} Stage 2} optimization to give a final solution and evaluate post-hoc regret on it.
\Cref{table:alloy_2SPPO} reports the mean post-hoc regrets and standard deviations across 10 runs for each training method on the alloy production problem.
The table shows that our method, 2S, achieves the best performance, compared with IntOpt-C achieving the second best performance, beating all the classical training approaches.
{\color{black}Compared with IntOpt-C, 2S obtains 1.20\%-3.18\% smaller mean post-hoc regrets in brass production, and 1.18\%-5.33\% smaller mean post-hoc regret in titanium-alloy production.
Compared with the classical approaches, the improvements are much more significant.
2S obtains at least 2.44\%-45.48\% smaller mean post-hoc regrets in brass production, and at least 1.39\%-38.78\% smaller mean post-hoc regret in titanium-alloy production.}
The average True Optimal Values (TOV) are reported in the last column of \Cref{table:alloy_2SPPO} for reference, although the reader should take care to \emph{not} over-interpret the ratio between the post-hoc regret and the true optimal value, since the post-hoc regret also includes the penalty term which increases with the penalty factors.



\begin{table*}
\caption{Mean post-hoc regrets and standard deviations for \new{0-1 knapsack} problem using the Two-Stage Predict+Optimize framework.}
\label{table:ks}
\vskip 0.1in
\centering
\resizebox{0.9\textwidth}{!}{
\begin{tabular}{l|l||c|c|c|c|c|c|c|c}
\hline
PReg                    & \makecell{Penalty\\factor}    & 2S          & CombOptNet        & Ridge       & $k$-NN        & CART        & RF          & NN          & TOV                          \\ \hline \hline
\multirow{3}{*}{100} & 0.21              & \textbf{1.26±0.01} & 9.45±0.19  & 9.46±0.19  & 9.38±0.21  & 8.67±0.13 & 9.50±0.26  & 9.81±0.20  & \multirow{3}{*}{29.68±0.14} \\
                     & 0.25              & \textbf{6.28±0.05} & 9.60±0.22  & 9.77±0.19  & 9.70±0.19  & 9.19±0.12 & 9.82±0.27  & 10.11±0.20 &                             \\
                     & 0.3               & \textbf{9.22±0.10} & 10.45±0.34 & 10.16±0.19 & 10.10±0.18 & 9.85±0.11 & 10.22±0.28 & 10.49±0.21 &                             \\  \hline
\multirow{3}{*}{150} & 0.21              & \textbf{0.73±0.01} & 8.90±8.97  & 9.12±0.22  & 8.91±0.20  & 8.46±0.18 & 9.20±0.27  & 9.66±0.47  & \multirow{3}{*}{40.23±0.19} \\
                     & 0.25              & \textbf{3.64±0.04} & 9.11±9.41  & 9.40±0.21  & 9.19±0.20  & 8.88±0.17 & 9.47±0.26  & 9.92±0.43  &                             \\
                     & 0.3               & \textbf{7.27±0.06} & 9.34±9.38  & 9.76±0.22  & 9.53±0.19  & 9.41±0.17 & 9.81±0.24  & 10.23±0.38 &                             \\  \hline
\multirow{3}{*}{200} & 0.21              & \textbf{0.33±0.01} & 15.16±0.21           & 6.57±0.21  & 6.38±0.29  & 6.26±0.21 & 6.59±0.23  & 7.08±0.95  & \multirow{3}{*}{48.13±0.24} \\
                     & 0.25              & \textbf{1.67±0.03} &  15.20±0.27          & 6.80±0.20  & 6.62±0.29  & 6.57±0.19 & 6.82±0.21  & 7.27±0.88  &                             \\
                     & 0.3               & \textbf{3.33±0.06} & 15.25±0.22           & 7.09±0.19  & 6.91±0.28  & 6.95±0.19 & 7.10±0.18  & 7.52±0.80  &                             \\  \hline
\multirow{3}{*}{250} & 0.21              & \textbf{0.07±0.00} & 20.42±0.25           & 2.39±0.22  & 2.18±0.20  & 2.45±0.20 & 2.34±0.32  & 2.70±1.34  & \multirow{3}{*}{53.43±0.26} \\
                     & 0.25              & \textbf{0.34±0.02} & 20.47±0.13           & 2.53±0.21  & 2.34±0.19  & 2.60±0.19 & 2.49±0.30  & 2.82±1.26  &                             \\
                     & 0.3               & \textbf{0.69±0.04} &  20.54±0.32          & 2.71±0.20  & 2.54±0.18  & 2.79±0.18 & 2.67±0.28  & 2.97±1.16  &                     \\ \hline            
\end{tabular}}
\end{table*}

\begin{table*}
\caption{Mean post-hoc regrets and standard deviations for the NSP using the Two-Stage Predict+Optimize framework.}
\label{table:nsp}
\vskip 0.1in
\centering
\resizebox{0.9\textwidth}{!}{
\begin{tabular}{l||c|c|c|c|c|c|c}
\hline
Penalty factor    & 2S                   & Ridge         & $k$-NN          & CART          & RF            & NN           & TOV                           \\ \hline \hline
0.25±0.015 & \textbf{3.94±1.91}   & 6.45±4.68     & 15.20±5.76    & 26.20±8.96    & 19.47±7.19    & 4.27±2.22    & \multirow{6}{*}{190.21±26.17} \\
0.5±0.015  & \textbf{6.92±2.26}   & 12.68±9.35    & 30.29±11.53   & 52.47±17.96   & 38.93±14.42   & 8.20±4.40    &                               \\
1.0±0.015  & \textbf{13.12±3.15}  & 25.12±18.71   & 60.43±23.07   & 105.01±36.00  & 77.86±28.99   & 16.00±8.78   &                               \\
2.0±0.015  & \textbf{25.04±9.29}  & 49.95±37.39   & 120.62±46.08  & 210.02±72.06  & 155.64±58.06  & 31.51±17.40  &                               \\
4.0±0.015  & \textbf{33.29±9.53}  & 99.61±74.78   & 241.01±92.14  & 420.04±144.18 & 311.19±116.23 & 62.52±34.64  &                               \\
8.0±0.015  & \textbf{46.72±14.80} & 198.91±149.54 & 481.79±184.27 & 840.10±288.45 & 622.32±232.56 & 124.54±69.14 &  \\ \hline                            
\end{tabular}}
\end{table*}

\paragraph{0-1 knapsack}
\new{In the second example, we showcase our framework on a packing integer programming problem, a variant of the 0-1 knapsack problem, with unknown item prices $p_i$ and sizes $s_i$.
See \Cref{app:proxy_buyer} for details of an application in running a ``proxy buyer'' business.
Here, the unknown parameters appear in both the objective and constraints.
The proposed 2S method can handle this MILP straightforwardly, but the IntOpt-C method cannot be applied.}
Thus, we only experiment with the Two-Stage Predict+Optimize framework for evaluation, and compare the proposed 2S method with classical approaches and CombOptNet.
Again, all approaches are evaluated at test time using the {\color{black} Stage 2} optimization to yield the final solution, on which the post-hoc regret is computed.

\new{The MILP formulation of the two stages and the penalty function are described in \Cref{app:proxy_buyer}.}
We use the dataset of Paulus et al.~\cite{paulus2021comboptnet}, in which each 0-1 knapsack instance consists of $10$ items and each item has $4096$ features related to its price and size.
For both NN and our method, we use a 5-layer fully-connected network with 512 neurons per hidden layer.
We conduct experiments on $4$ different knapsack capacities:\ 100, 150, 200, and 250.
We use 700 instances for training and 300 instances for testing the model performance.
{\color{black}
Considering the real-life setting, we use $3$ scales of the penalty factor for the penalty function in \Cref{app:proxy_buyer}:\ $\sigma = 0.05, 0.25, $ or $0.5$.
}


\Cref{table:ks} reports the mean post-hoc regrets and standard deviations across 10 runs for each approach on this 0-1 knapsack problem.
{\color{black} Due to the space limitation and the fact that larger penalty factors are unrealistic in this problem setting, we present penalty factors $\geq 1$ in Appendix \ref{app:large_pen_KS}.}
The average True Optimal Values (TOV) are reported in the last column, again for reference.
As shown in the table, our proposed 2S method has significantly better results.
In addition, we observe that across all approaches, the post-hoc regrets decrease as the knapsack capacity increases:\ this is due to the fact that as the capacity increases, more and more items can be selected, and so minor inaccuracies in predicted values/weights do not affect the selected set of items as much.
On the other hand, the advantage of our 2S method over other approaches actually becomes more significant as the capacity increases, demonstrating the superior accuracy of our approach.


\paragraph{Nurse Scheduling Problem}

Our last experiment is on the nurse scheduling problem (NSP) with unknown patients needs, with the goal of scheduling a nurse roster satisfying unknown patient load demands while minimizing mismatched nurse-shift preferences as the objective.
\new{See \Cref{app:nurse} for a description of the application scenario, the MILP formulations of the two stages, as well as the associated penalty function.
}
\new{
Given that NSP is not an LP, IntOpt-C again does not apply, and so we only
}
compare the proposed 2S training method with the classical approaches, using the Two-Stage Predict+Optimize framework {for evaluation}.
Each NSP instance consists of 15 nurses, 7 days, and 3 shifts per day.
The nurse preferences are obtained from the NSPLib dataset \cite{vanhoucke2007nsplib}, which is widely used for NSP \cite{maenhout2010branching,muniyan2022artificial}.
The number of patients that each nurse can serve in one shift is randomly generated from [10,20], representing the fact that each nurse has different capabilities.
Given that we are unable to find datasets specifically for the patient load demands and relevant prediction features, we follow the experimental approach of Demirovic et al.~\cite{demirovic2019investigation, demirovic2019predict, demirovic2020dynamic} and use real data from a different problem (the ICON scheduling competition) as the numerical values required for our experiment instances.
In this dataset, the unknown number of patients per shift is predicted by $8$ features.

Since there are far fewer features than the previous experiments, for both NN and 2S we use a smaller network structure:\ a 4-layer fully-connected network with 16 neurons per hidden layer.
We use 210 instances for training and 90 instances for testing.
Just like the first experiment, we use $6$ scales of penalty factors (see \Cref{app:nurse} for the penalty function):\ $\gamma$ with i.i.d.~entries drawn uniformly from $[0.25 \pm 0.015], [0.5 \pm 0.015], [1.0\pm 0.015], [2.0 \pm 0.015], [4.0\pm 0.015]$, and $[8.0 \pm 0.015]$.

\Cref{table:nsp} reports the mean post-hoc regrets and standard deviations across 10 runs for each approach on the NSP.
The table shows that the proposed 2S method again has the best performance among all the training approaches.
Our 2S method obtains at least 7.61\%, 15.65\%, 17.99\%, 20.51\%, 46.76\%, and 62.49\% smaller post-hoc regret than other classical methods when the penalty factor is $[0.25 \pm 0.015], [0.5 \pm 0.015], [1.0\pm 0.015], [2.0 \pm 0.015], [4.0\pm 0.015]$, and $[8.0 \pm 0.015]$ respectively.

\paragraph{Runtime Analysis}
{\color{black}
\Cref{app:runtime} gives the training times for each method.
Most classical approaches are faster than our 2S method, although as shown their post-hoc regrets are much worse.
In alloy production, the only setting where IntOpt-C applies, its running time is shorter but comparable with 2S. 
In 0-1 knapsack, the only problem with public CombOptNet code, the 2S method is much faster.
}

{\color{black}
\section{Literature Review}
\label{sec:Literature}
\Cref{sec:introduction} already summarized prior works in Predict+Optimize, most of which focus on learning unknowns only in the objective.
Only the Hu et al.~\cite{hu2022Packing} framework considers unknowns in constraints.


Here we summarize other works related to learning unknowns in optimization problem constraints, particularly those outside of Predict+Optimize.
These works can be placed into two categories. 

One category also considers learning unknowns in constraints, but with very different goals and measures of loss. 
For example, CombOptNet \cite{paulus2021comboptnet} and Nandwani et al.~\cite{nandwani2022solver} focus on learning parameters so as to make the predicted optimal solution (first-stage solution in our proposed framework) as close to the true optimal solution $x^*$ as possible in the solution space/metric. 
By contrast, our proposed framework explicitly formulates the two-stage framework and post-hoc regret in order to directly capture rewards and costs in application scenarios. 
Experiments on 0-1 knapsack in \Cref{sec:experiments} show that these other methods yield worse predictive performance when evaluated on the post-hoc regret, under the proposed two-stage framework.

Another category gives ways to differentiate through LPs or LPs with regularizations, as a technical component in a gradient-based training algorithm. 
As mentioned in \Cref{sec:PPO4MILP}, these works can indeed be used in place of our proposed approach in \Cref{sec:PPO4MILP}/\Cref{app:MILP}.
However, we point out that:\ (i) these other technical tools are essentially orthogonal to our primary contribution, which is the two-stage framework (\Cref{sec:framework}), and (ii) nonetheless, experiments on the 0-1 knapsack in \Cref{app:Add_exp_ks} demonstrate that our gradient calculation approach performs at least as well in post-hoc regret performance as other works, while being faster. 
}

\section{Summary}
We proposed Two-Stage Predict+Optimize:\ a new, conceptually simpler and more powerful framework for the Predict+Optimize setting where unknown parameters can appear both in the objective and in constraints.
We showed how the simpler perspective offered by the framework allows us to give a general training framework for all \new{MILP}s, \new{contrasting prior work which apply only to covering and packing LPs}.
Experimental results demonstrate that our training method offers significantly better prediction performance over other classical and state-of-the-art approaches.

{\color{black}\section*{Acknowledgments}
We thank the anonymous referees for their constructive comments.
In addition, Xinyi Hu and Jimmy H.M.~Lee acknowledge the financial support of a General Research Fund (RGC Ref. No. CUHK 14206321) by the University Grants Committee, Hong Kong.
Jasper C.H.~Lee was supported in part by the generous funding of a Croucher Fellowship for Postdoctoral Research, NSF award DMS-2023239, NSF Medium Award CCF-2107079 and NSF AiTF Award CCF-2006206.}

\bibliographystyle{unsrt}  
\bibliography{references}  

\newpage
\appendix


\section{Detailed discussion on the Two-Stage Predict+Optimize framework}
\label{app:framework_details}

\subsection{Problem modelling using the framework}
\label{app:recourse}

As mentioned in Section \ref{sec:framework}, the proposed Two-Stage Prediction+Optimize framework is phrased differently from some other two-stage problem formulations.
The proposed framework phrases Stage 1 solutions as \emph{soft} commitments, and corrects Stage 1 solutions with \emph{penalty} in Stage 2. 
On the other hand, some two-stage frameworks phrase Stage 1 solutions as \emph{hard} commitments, and include explicit \emph{recourse variables} in both stages of OP to model the correction in Stage 2.
\new{Some optimization problems are more natural to express according to one perspective than the other, while some problems might be straightforward to express in either.
This section aims to show that our framework, while explicitly stated and motivated according to the first perspective, is in fact general enough to also easily model the second perspective of hard commitments and recourse actions.
In what follows, we first describe different types of variables and how our framework can capture them.
Then, we give two example problems that respectively use the soft/hard commitment perspectives, and we detail how the problem can be modelled.
}


\new{

\begin{description}
    \item[Soft commitment variables:] These are variables which represent decisions that correspond to \emph{soft} commitments made in Stage 1 in an application, namely decisions that may be modified once the true parameters are revealed, but at a cost or penalty.
    The discussion in \Cref{sec:framework} is tailored for this kind of variables---simply define such a variable in Stage 1 and use a finite penalty function to model the cost of changing this soft commitment in Stage 2.
    \item[Hard commitment variables:] These are variables $x^*_\mathrm{hard}$ which represent \emph{hard} commitments made in Stage 1, meaning that after commitment, they absolutely cannot change in Stage 2.
    To model these variables in our framework, simply write a penalty function that is infinite whenever Stage 1 and Stage 2 solutions for these variables are different.
    Explicitly, add a term $\infty \cdot \1[x^*_{\mathrm{hard},1} \neq x^*_{\mathrm{hard},2}]$.
    This way, no Stage 2 solution will change these variables from what they were committed to in Stage 1.
    \item[Recourse/other variables:] These are variables which represent explicit actions/decisions taken only in Stage 2, once the true parameters are revealed.
    These variables are necessary, for example, when Stage 1 actions are all hard commitment variables, to ensure that we have a mechanism for corrective action if the hard Stage 1 decisions are in any way ``incompatible" with the revealed parameters.
    These corrective actions also typically come at a cost.
    Thus, to model these variables, simply include them in both Stages 1 and 2, and incorporate their cost into the objective of the optimization problem.
    There should also be 0 penalty for modifying these variables between the stages.
\end{description}
To summarize, Stage 1 actions can be classified as either soft or hard commitments, depending on whether they can be changed in Stage 2 (at a finite penalty).
Stage 2 actions are classified as ``recourse" variables, which are simply variables that have no penalty from changing between Stage 1 to Stage 2.
The above discussion shows how our framework captures all these possibilities.
We now give two example applications:\ the first one is more naturally expressed via the soft commitment perspective, and the second one is more natural to phrase using hard commitments+recourse.
We give also their explicit formulations to demonstrate how the modelling is done in our framework.
}

We first show an example problem which is naturally modelled using soft commitment variables and penalty functions.
Consider the product-stocking problem in Example \ref{example:knapsack} again, where regular orders have to be placed two weeks ahead of monthly deliveries.
We aim to maximize the net profit by selling stocked products, under the constraint that the available storage space is limited.
Each product $i$ has a purchase price $p^u_i$ (the price of purchasing the product from the wholesale company) and a selling price $p^s_i$ (the price of selling the product to customers), and needs $s_i$ space to be stocked.
Let $x_i$ denote whether the product $i$ is ordered.
In Stage 1, i.e., two weeks before the delivery, the available storage space $Sp$ at the time of delivery is unknown, and we place the order $x$ based on estimated space.
In Stage 2, i.e., the night before the delivery, the precise available space is revealed, and we ask the wholesale company to change the order but need to pay a surcharge for last-minute changes.
Assume the surcharge for the last-minute change in the order of product $i$ is $c_i$.
\new{In this example, $x_i$ is thus a soft commitment variable, and we model the surcharge $c_i$ using the penalty function of the framework.}

The proposed framework can naturally model this problem.
The Stage 1 OP can be formulated as:
\begin{equation*}
\begin{aligned}
{x^*_1} = &\argmax _{x} \sum_i (p^s_i - p^u_i) x_i \\
&\text { s.t. } \sum_i s_i x_i \leq \hat{Sp}  , \quad x \in \{0,1\}
\end{aligned}
\end{equation*}
In Stage 2, the order $x^*_1$ can be changed with surcharges, which can be modelled as a penalty function:
\begin{equation*}
    Pen(x^*_1 \to x) = \sum_i c_i | x^*_1 - x_i |
\end{equation*}
Then the Stage 2 OP can be formulated as:
\begin{equation*}
\begin{aligned}
{x^*_2} = &\argmax _{x} \sum_i (p^s_i - p^u_i) x_i - \sum_i c_i | x^*_1 - x_i | \\
&\text { s.t. } \sum_i s_i x_i \leq Sp  , \quad x \in \{0,1\}
\end{aligned}
\end{equation*}

\new{Next, we} give an example problem which is \new{more} naturally modelled using hard commitment variables and recourse variables.
Consider a production-planning problem:\ a company owns a set of facilities and provides services to a set of customers.
Each facility $i$ can provide a fixed amount of services $m_i$ and has a fixed operating cost $f_i$ in the standard working mode.
The company aims to meet customer demands $d$ at the minimum operating costs.
In Stage 1, the company decides which facilities to \new{open for} production based on the estimated demands $\hat{d}$.
\new{This is a binary decision variable $x_i$ for each facility $i$.}
In Stage 2, the orders from customers arrive and the demands $d$ are revealed.
If the services provided by the operating facilities in the standard mode cannot meet demands, the company will ask \new{some} facilities \new{that are already operating (i.e.~$x_i = 1$)} to work overtime, but naturally need to pay high overtime fees.
Let $o_i$ denote the unit overtime fee for producing service in facility $i$, and $\sigma_i$ denote the amount of service provided by overtime working in facility $i$.

This example is naturally modelled using hard commitment variables and recourse variables.
Which facilities to \new{operate}, $x$, \new{is a vector of 0/1} hard commitment variables.
The amount of service, $\sigma$, provided by the overtime working mode of operating facilities can be modeled by recourse variables, and the recourse costs are the overtime fees $o$. 
Using hard commitment variables and recourse variables, the Stage 1 OP can be formulated as:

\begin{equation*}
\begin{aligned}
{x^*_1, \sigma^*_1} = &\argmin _{x, \sigma} \sum_i f_i x_i + \sum_i o_i \sigma_i \\
&\text { s.t. } \sum_i (m_i + \sigma_i) x_i \geq \hat{d}  , \quad x \in \{0,1\}, \quad \sigma \geq 0
\end{aligned}
\end{equation*}
In Stage 2, we \new{include} a term $\infty \cdot \1[x^*_1 \neq x]$ \new{in the penalty function part of the Stage 2 objective} to make sure that $x$ cannot be changed, while the penalty for changing $\sigma$ is zero \new{since it is a recourse variable}.
The Stage 2 OP is formulated as:
\begin{equation*}
\begin{aligned}
{x^*_2, \sigma^*_2} = &\argmin _{x, \sigma} \sum_i f_i x_i + \sum_i o_i \sigma_i + \infty \cdot \1[x^*_1 \neq x] \\
&\text { s.t. } \sum_i (m_i + \sigma_i) x_i \geq d  , \quad x \in \{0,1\}, \quad \sigma \geq 0
\end{aligned}
\end{equation*}


\new{
In summary, we discussed how to model in our framework soft and hard commitment actions in Stage 1, as well as recourse/other actions in Stage 2.
We gave two concrete examples to demonstrate how such modelling can be done.
}

\subsection{What if correction/recourse is not possible in the application?}
\label{app:no_correction}


\new{The motivating premise of this paper is that the application scenario at hand allows for some post-hoc corrective action once the true parameters are revealed.}
One natural question is:\ what if \new{such corrective action} (Stage 2 actions) is not actually possible in the application?
For example, in our \new{running example of the product-stocking problem}, we considered a wholesale company that allows for order changes the night before.
Other wholesalers may not allow such a correction/modification.
Our framework can essentially still model these scenarios:\ just set the penalty of modification to infinity (or at least, very large numbers for practice).
\new{Concretely, use the penalty function $\infty \cdot \1[x^*_2 \neq x^*_1]$ (or replace $\infty$ with a very large number).}
This penalty function encourages the learning algorithm to learn \emph{conservative} predictions that maximize the chances of yielding Stage 1 decisions that remain feasible in Stage 2.

To show this, we ran another quick experiment, using the 0-1 knapsack problem setting in the paper (with knapsack capacity = 100).
This time, as we varied the magnitude of the penalty function, we measure at test time the empirical fraction of Stage 1 solutions that remain feasible under the true parameters.
The results in Table \ref{table:feasibility} demonstrate our claim that as the penalty term increases, the predictions get more and more likely to remain feasible, making it a reasonable way to train a predictor even when Stage 2 correction mechanisms do not actually exist in the application.


\begin{table*}[h]
\caption{\new{Mean and standard deviation of empirical fraction of Stage 1 solutions that remain feasible in Stage 2, for the 0-1 knapsack problem when capacity is 100 using the Two-Stage Predict+Optimize framework.}}
\label{table:feasibility}
\vskip 0.1in
\centering
\resizebox{0.3\textwidth}{!}{
\begin{tabular}{l||c}
\hline
\multicolumn{1}{c||}{Penalty   Factor} & Feasibility\%  \\ \hline \hline
0.05                                 & 0.00\%±0.00\%  \\ 
0.25                                 & 0.00\%±0.00\%  \\ 
0.5                                  & 1.73\%±0.52\%  \\ 
1                                    & 50.93\%±1.92\% \\ 
2                                    & 51.63\%±1.22\% \\
4                                    & 99.07\%±0.31\% \\ \hline            
\end{tabular}}
\end{table*}


\subsection{Two-Stage Predict+Optimize vs Prior Hu et al.~Framework}
\label{app:new_vs_old}

As mentioned earlier in \Cref{sec:framework}, Two-Stage Predict+Optimize is technically mathematically equivalent to the prior framework of Hu et al.~\cite{hu2022Packing}, in the sense of expressiveness, \emph{ignoring} differentiability issues.
On the one hand, we can regard the {\color{black} Stage 2} optimization as a form of correction function, and hence Two-Stage Predict+Optimize can be considered as a special case of the Hu et al.~\cite{hu2022Packing} framework.
On the other hand, given a correction function as in the Hu et al.~\cite{hu2022Packing} framework, we can simply modify the penalty function such that we keep the penalty value of the corrected solution, and make the penalty value infinite for all other potential {\color{black} Stage 2} solutions.
This forces the {\color{black} Stage 2} optimization to always emulate the correction function.
In this sense, our Two-Stage Predict+Optimize framework can also emulate the Hu et al.~\cite{hu2022Packing} framework, meaning that the two frameworks are technically equivalent.

Nevertheless, the Two-Stage Predict+Optimize framework is both \emph{conceptually simpler} and \emph{easier to apply}.
In the main paper, we showed how to perform end-to-end neural network training within this new framework whenever both stages of optimization can be phrased as {\color{black}MILP}s, and also give empirical experimental results.
Together, they demonstrate the much more general applicability of the Two-Stage Predict+Optimize framework.

We end this appendix with the statement and short proof that, conditioned on the same penalty function and prediction model, Two-Stage Predict+Optimize always outputs at least as good a final solution as the prior framework using any correction function.

\begin{proposition}
\label{prop:two-stage}
Consider an arbitrary \emph{minimization} Para-OP $P$, penalty function $Pen$, correction function $x^*_{corr}$, estimated parameters $\thetahat$ and true parameters $\theta$.
Let $x^*(\thetahat)$ and ${\color{black}x^*_1}(\thetahat)$ both denote the estimated solution from the estimated parameters $\thetahat$, ${\color{black}x^*_2}(\thetahat,\theta)$ be the output final solution from the Two-Stage Predict+Optimize framework, and $x^*_{corr}(\thetahat,\theta)$ be the output corrected solution from the prior framework of Hu et al.
Then,
\begin{equation*}
    obj({\color{black}x^*_2}(\thetahat,\theta),\theta) + Pen({\color{black}x^*_1}(\thetahat)\to {\color{black}x^*_2}(\thetahat,\theta)) \le obj(x^*_{corr}(\thetahat,\theta),\theta) + Pen(x^*(\thetahat)\to x^*_{corr}(\thetahat,\theta))
\end{equation*}
\end{proposition}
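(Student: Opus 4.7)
The plan is to prove this proposition via a direct optimality argument, exploiting the fact that $x^*_2$ is defined as the minimizer of a well-defined optimization problem that contains the relevant quantity on the right-hand side as a feasible value.

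First, I would recall the definition of the Stage 2 solution: $x^*_2(\thetahat,\theta)$ is by construction the minimizer of the Stage 2 optimization, i.e.\ it minimizes the sum $obj(x,\theta) + Pen(x^*_1(\thetahat)\to x, \theta)$ subject to the constraints $C(x,\theta)$. Next, I would observe that by definition of a correction function in the Hu et al.~framework, $x^*_{corr}(\thetahat,\theta)$ is guaranteed to be feasible under the true parameters $\theta$, i.e.\ it satisfies $C(x^*_{corr}(\thetahat,\theta),\theta)$. Hence $x^*_{corr}(\thetahat,\theta)$ is a feasible candidate for the Stage 2 optimization problem.

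The conclusion then follows immediately from the minimality of $x^*_2$: since any feasible candidate achieves an objective value at least that of the minimizer, we have
\begin{equation*}
    obj(x^*_2(\thetahat,\theta),\theta) + Pen(x^*_1(\thetahat)\to x^*_2(\thetahat,\theta)) \le obj(x^*_{corr}(\thetahat,\theta),\theta) + Pen(x^*_1(\thetahat)\to x^*_{corr}(\thetahat,\theta)).
\end{equation*}
Finally, since the statement equates $x^*(\thetahat)$ with $x^*_1(\thetahat)$ (both representing the estimated Stage 1 solution from $\thetahat$), the right-hand side is exactly the quantity appearing in the proposition's statement, completing the proof.

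There is really no main obstacle here: the entire argument is a one-line application of optimality, and the only thing worth stressing is the verification that the correction function output is a legal candidate in the Stage 2 feasible region. No differentiability or smoothness assumptions are needed, and the result holds for any choice of penalty function, correction function, and prediction, which is the strongest form the statement can take.
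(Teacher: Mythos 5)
Your proof is correct and follows essentially the same route as the paper's: both sides of the inequality are the Stage~2 objective evaluated at $x^*_2$ and $x^*_{corr}$ respectively, and the inequality is immediate from the optimality of $x^*_2$. Your additional remark that $x^*_{corr}(\thetahat,\theta)$ is feasible for the Stage~2 problem (since correction functions output $\theta$-feasible solutions by definition) is a detail the paper leaves implicit, and is worth making explicit.
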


\begin{proof}
Observe that both sides of the inequality are the objective of the {\color{black} Stage 2} optimization problem, evaluated at ${\color{black}x^*_2}$ and $x^*_{corr}$ respectively.
Since ${\color{black}x^*_2}$ is the optimal solution to the minimization problem, the inequality follows directly.
\end{proof}

\section{Gradient Calculations for Problem (\ref{eq:logbarrier_relaxation})}
\label{app:MILP}
\paragraph{Approximating $\frac{\partial {\color{black}x^*_1}}{\partial \hat{\theta}}$.}
In the context of the MILP, the unknown parameter $\thetahat$ may either be $c, A, b, G$, or $h$.
Using the solution $x$ and the barrier weight $\mu$ returned from solving Problem (\ref{eq:logbarrier_relaxation}), we can compute the relevant derivatives of $\frac{\partial {\color{black}x^*_1}}{\partial \hat{c}}$.
The case of $c$ has already been derived by Mandi and Guns~\cite{mandi2020interior} (see Appendix A.1 and A.2 in their paper).
Problem (\ref{eq:logbarrier_relaxation}) can be rewritten as:
\begin{equation}
    \begin{aligned}
        x^* = &\argmin _{x'} c'^{\top} x' - \mu \sum_{i=1}^{d+q} \ln(x'_i) \\
        &\text { s.t. }A'x'=b'
    \end{aligned}
    \label{eq:logbarrier_relaxation1}
\end{equation}
where
\begin{equation*}
    \begin{array}{l}
    c' \ = \left[c \quad 0 \right] \in \Real^{d+q} \\
    x' \ = \left[x \quad s \right] \in \Real^{d+q} \\
    A' \ = \left[\begin{array}{c}
                A \quad 0 \\
                G \quad -I 
                \end{array}\right] \in \Real^{(p+q) \times (d+q)} \\
    b' \ = \left[ \begin{array}{c}
               b \\
               h
                \end{array} \right]  \in \Real^{p+q}
    \end{array}
\end{equation*}

\begin{fact}
    Consider the LP relaxation (\ref{eq:logbarrier_relaxation1}), defining $x'$ as a function of $c', A'$ and $b'$.
    Then, according to Mandi and Guns~\cite{mandi2020interior}, under this definition of $x^*$,
    \begin{equation*}
        \left[\begin{array}{ccc}
        -X'^{-1}T & A'^{\top} & -c' \\
        A' & 0 & -b' \\
        -c'^\top & b'^\top & \frac{\kappa}{\tau}
        \end{array}\right]\left[\begin{array}{l}
        \frac{\partial x'}{\partial c'} \\
        \frac{\partial y'}{\partial c'} \\
        \frac{\partial \tau}{\partial c'} 
        \end{array}\right]
        =
        \left[\begin{array}{c}
        \tau I\\
        0 \\
        x^\top
        \end{array}\right]
    \end{equation*}
    where $X' = diag(x'), t=\mu X'^{-1}e, T = diag(t)$, $y'$ is the lagrangian multiplier of Problem (\ref{eq:logbarrier_relaxation1}), and $\kappa$ and $\tau$ are additional variables added by Mandi and Guns~\cite{mandi2020interior} to represent the duality gap.
    The gradient $\frac{\partial {\color{black}x^*_1}}{\partial \hat{c}}$ can be obtained by solving this system of equalities.
\end{fact}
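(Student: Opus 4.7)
The plan is to derive the claimed $3\times 3$ block linear system by implicit differentiation of the central-path (perturbed KKT) conditions of the self-dual homogeneous embedding of the log-barrier LP~(\ref{eq:logbarrier_relaxation1}).

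First, I would write down the self-dual homogeneous embedding explicitly. With scaling scalar $\tau > 0$ and duality-gap scalar $\kappa \ge 0$, the embedding consists of primal feasibility $A'x' - b'\tau = 0$, dual feasibility $A'^\top y' - c'\tau + t = 0$ with $t \ge 0$ the dual multiplier for $x' \ge 0$, the gap equation $-c'^\top x' + b'^\top y' - \kappa = 0$, and the central-path complementarity relations $X't = \mu e$ and $\tau\kappa = \mu$. Eliminating $t$ via $t = \mu X'^{-1}e$ (so $T = \operatorname{diag}(t) = \mu X'^{-1}$) and $\kappa$ via $\kappa = \mu/\tau$ reduces the system to three equations in the three unknown blocks $(x', y', \tau)$.

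Second, I would differentiate these three equations implicitly with respect to each entry of $c'$, treating $x'$, $y'$, and $\tau$ as smooth functions of $c'$ on the interior of the central path. Differentiating the dual-feasibility equation produces the top block row of the claimed system: the $-X'^{-1}T$ block arises from $\partial t/\partial x' = -\mu X'^{-2}$, the $A'^\top$ block is direct, the $-c'$ column comes from the explicit $\tau$-coefficient, and the right-hand side $\tau I$ arises from the explicit $c'$-dependence in $-c'\tau$. Differentiating primal feasibility gives the middle row $A'\,\tfrac{\partial x'}{\partial c'} - b'\,\tfrac{\partial \tau}{\partial c'} = 0$. Differentiating the gap equation (with $\kappa = \mu/\tau$ substituted) yields the bottom row: the scalar $\kappa/\tau$ in the $(3,3)$ position arises from $\partial\kappa/\partial\tau = -\mu/\tau^2 = -\kappa/\tau$ together with the sign convention of the embedding, and the right-hand side $x^\top$ comes from differentiating the explicit $-c'^\top x'$ term.

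Assembling these three rows reproduces the claimed matrix equation, and the desired gradient $\tfrac{\partial x^*_1}{\partial \hat c}$ is read off as the first block of the solution. The main obstacle is bookkeeping: the signs and scalings introduced by the self-dual embedding must be tracked carefully, and in particular the $\kappa/\tau$ entry in position $(3,3)$ must be verified by differentiating $\tau\kappa = \mu$ along the central path with the correct sign convention. Invertibility of the left-hand Jacobian, which is what justifies the implicit function theorem, follows from standard interior-point theory: Slater's condition holds for~(\ref{eq:logbarrier_relaxation1}), and strict complementarity on the path guarantees the system is nonsingular, so $(x', y', \tau)$ depend smoothly on $c'$ in a neighborhood of the current iterate.
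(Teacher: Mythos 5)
Your proposal is correct and follows essentially the same route as the source the paper relies on: the paper states this as an imported Fact and gives no proof of its own, deferring entirely to Appendix A.1--A.2 of Mandi and Guns, where the system is obtained exactly as you describe, by implicitly differentiating the central-path equations of the homogeneous self-dual embedding (primal feasibility, dual feasibility with $t=\mu X'^{-1}e$ eliminated, and the gap equation with $\kappa=\mu/\tau$). Your sign bookkeeping checks out for all three block rows, so nothing further is needed.
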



Define the notation $f(x,c,G,h) = c^\top x - \mu \sum_{i=1}^d \ln(x_i) - \mu \sum_{i=1}^q \ln(G_i^\top x - h_i)$.
Then, Problem (\ref{eq:logbarrier_relaxation}) can be expressed as finding:
\begin{equation}
    x^* = \argmin_x f(x,c,G,h) \ \text{ s.t. } Ax = b
\label{eq:logbarrier_relaxation2}
\end{equation}
Using this notation, we write down the following four lemmas on computing $\Del{x^*}{G}, \Del{x^*}{h}, \Del{x^*}{A}$, and $\Del{x^*}{b}$ approximately.


\begin{restatable}{lemma}{delxG}
Consider the LP relaxation (\ref{eq:logbarrier_relaxation2}), defining $x^*$ as a function of $c, A, b, G$ and $h$.
Then, under this definition of $x^*$,
\begin{equation*}
     \Del{ x^*}{G} = (H^{-1} A^\top (A H^{-1} A^\top)^{-1} A H^{-1} - H^{-1}) f_{Gx}(x,c,G,h)
\end{equation*}
where $H=f_{xx}(x,c, G, h)$ denotes the matrix of second derivatives of $f$ with respect to different coordinates of $x$, and similarly for other subscripts, and explicitly:
\begin{equation}
f_{x_{k} x_{j}}(x,c,G,h)=\left\{\begin{array}{c}
    \mu x_{j}^{-2} + \mu \sum_{i=1}^{q}G_{ij}^2/(G_i^\top x-h_i)^2, \quad j=k \\
     \mu \sum_{i=1}^{q}G_{ij}G_{ik}/(G_i^\top x-h_i)^2, \quad j \neq k
    \end{array}\right.
\end{equation}
and
\begin{equation*}
\begin{aligned}
f_{G_{\ell r} x_{j}}(x,c,G,h) = 
\begin{cases}
\mu G_{\ell j} x_{j}/(G_\ell^\top x-h_\ell)^2 - \mu/(G_\ell^\top x-h_\ell) & r=j \\
\mu G_{\ell j} x_{r}/(G_\ell^\top x-h_\ell)^2 & r \neq j
\end{cases}
\end{aligned}
\end{equation*}
Note that when there are no equality constraints, i.e., $A=0$, we have
\begin{equation*}
     \Del{ x^*}{G} = - H^{-1} f_{Gx}(x,c,G,h)
\end{equation*}
which is the same as the Lemma 3 in \cite{hu2022Packing}.
\label{lemma:term3G_computation}
\end{restatable}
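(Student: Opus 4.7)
The plan is to apply the implicit function theorem to the first-order optimality (KKT) conditions of the convex program (\ref{eq:logbarrier_relaxation2}). Since Slater's condition holds and the objective $f$ is strictly convex in $x$ (the log-barriers are strictly convex and the linear term does not affect convexity), the optimum $x^*$ is characterized by the stationarity condition $\nabla_x f(x^*,c,G,h) + A^\top y^* = 0$ together with primal feasibility $Ax^* = b$, where $y^*$ is the Lagrange multiplier for the equality constraint. These two conditions smoothly define $x^*$ and $y^*$ as functions of $G$ in a neighborhood of the optimum, so we may differentiate them with respect to each entry $G_{\ell r}$.

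Differentiating the stationarity condition yields $H \,\partial x^*/\partial G + f_{Gx}(x,c,G,h) + A^\top \,\partial y^*/\partial G = 0$, where $H = f_{xx}(x,c,G,h)$, while differentiating $Ax^* = b$ (which is independent of $G$) yields $A\,\partial x^*/\partial G = 0$. Solving the first for $\partial x^*/\partial G$ gives $\partial x^*/\partial G = -H^{-1}(f_{Gx} + A^\top \partial y^*/\partial G)$; substituting into the second and using invertibility of $AH^{-1}A^\top$ (which follows from $A$ being full row rank under standard LP assumptions and $H$ being positive definite) produces $\partial y^*/\partial G = -(AH^{-1}A^\top)^{-1}AH^{-1}f_{Gx}$. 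Substituting back then recovers exactly the claimed closed form
\[
    \frac{\partial x^*}{\partial G} = \bigl(H^{-1} A^\top (A H^{-1} A^\top)^{-1} A H^{-1} - H^{-1}\bigr)\, f_{Gx}(x,c,G,h).
\]
The special case $A = 0$ (no equality constraints) drops the projection term and reduces to $-H^{-1} f_{Gx}$, recovering the Lemma~3 formula of Hu et al.~\cite{hu2022Packing}.

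It remains to verify the explicit expressions for the entries of $H = f_{xx}$ and of $f_{Gx}$. This is a direct computation: from
\[
    \frac{\partial f}{\partial x_j} = c_j - \frac{\mu}{x_j} - \mu \sum_{i=1}^{q} \frac{G_{ij}}{G_i^\top x - h_i},
\]
differentiating once more with respect to $x_k$ separates into the diagonal case (which picks up the extra $\mu x_j^{-2}$ from the $\ln x_j$ term) and the off-diagonal case (which does not), matching the stated formulas for $f_{x_k x_j}$. Differentiating $\partial f/\partial x_j$ with respect to $G_{\ell r}$ isolates the $i = \ell$ term in the sum; by the product rule on $G_{\ell j}/(G_\ell^\top x - h_\ell)$ one then splits into $r = j$ (an extra contribution from $\partial G_{\ell j}/\partial G_{\ell r} = 1$) and $r \neq j$ (only the chain-rule contribution from $\partial(G_\ell^\top x)/\partial G_{\ell r} = x_r$), exactly reproducing the two-case formula for $f_{G_{\ell r} x_j}$.

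The main obstacle is not conceptual but bookkeeping: one must handle the tensor-valued derivative $\partial x^*/\partial G$ consistently (treating it entry-by-entry via $\partial x^*/\partial G_{\ell r}$, then assembling), and verify that the linear system induced by the KKT conditions is invertible at the optimum — this is where strict convexity of the log-barrier and rank assumptions on $A$ enter. Once these are in place, the derivation is a straightforward implicit differentiation and the remaining calculations are direct.
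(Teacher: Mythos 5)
Your proposal is correct and follows essentially the same route as the paper: implicit differentiation of the KKT stationarity and feasibility conditions of the log-barrier program, solving the resulting block linear system for $\partial x^*/\partial G$, and a direct computation of the entries of $f_{xx}$ and $f_{Gx}$. The only differences are cosmetic (a sign convention on the multiplier $y$ and your added remarks on invertibility of $AH^{-1}A^\top$, which the paper leaves implicit).
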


\begin{proof}
Using the Lagrangian multiplier $y$, the Lagrangian relaxation of Problem (\ref{eq:logbarrier_relaxation2}) can be written as
\begin{equation}
    \mathbbm{L}(x,y;c,G,h) = f(x,c,G,h) + y^{\top}(b-Ax)
\label{eq:LagrangianRelaxation}
\end{equation}
Since $ x^* = \argmin_x f(x,c,G,h) \ \text{ s.t. } Ax = b$ is an optimum, $x^*$ must obey the Karush-Kuhn-Tucker (KKT) conditions, obtained by setting the partial derivative of Equation (\ref{eq:LagrangianRelaxation}) with respect to $x$ and $y$ to 0.
Let $f_{x}(x,c,G,h)$ denotes the vector of first derivatives of $f$ with respect to different coordinates of $x$, $f_{xx}(x,c,G,h)$ denotes the matrix of second derivatives of $f$ with respect to different coordinates of $x$, we obtain:
\begin{equation}
\begin{array}{r}
f_x(x,c,G,h)-A^{\top} y=0 \\
A x-b=0
\end{array}
\end{equation}
The implicit differentiation of these KKT conditions with respect to $G$ allows us to get the following system of equalities:
\begin{equation}
\left[\begin{array}{c}
f_{Gx}(x,c,G,h)\\
0
\end{array}\right]+\left[\begin{array}{cc}
f_{x x}(x, c, G, h) & -A^{\top} \\
A & 0
\end{array}\right]\left[\begin{array}{l}
\frac{\partial x}{\partial G} \\
\frac{\partial y}{\partial G}
\end{array}\right]=0
\end{equation}

By solving this system of equalities, we can obtain 
\begin{equation*}
    \Del{ x^*}{G} = (H^{-1} A^\top (A H^{-1} A^\top)^{-1} A H^{-1} - H^{-1}) f_{Gx}(x,c,G,h)
\end{equation*}
Since $f(x,c,G,h) = c^\top x - \mu \sum_{i=1}^d \ln(x_i) - \mu \sum_{i=1}^q \ln(G_i^\top x - h_i)$, we have
\begin{equation}
\begin{array}{l}
f_{x_{j}}(x,c,G,h) = c_j - \mu x_j^{-1} - \mu \sum_{i=1}^{q} G_{ij}/(G_i^\top x-h_i) \\
f_{x_{k} x_{j}}(x,c,G,h)=\left\{\begin{array}{c}
    \mu x_{j}^{-2} + \mu \sum_{i=1}^{q}G_{ij}^2/(G_i^\top x-h_i)^2, \quad j=k \\
     \mu \sum_{i=1}^{q}G_{ij}G_{ik}/(G_i^\top x-h_i)^2, \quad j \neq k
    \end{array}\right.
\end{array}
\end{equation}
and
\begin{equation*}
\begin{aligned}
f_{G_{\ell r} x_{j}}(x,c,G,h) = 
\begin{cases}
\mu G_{\ell j} x_{j}/(G_\ell^\top x-h_\ell)^2 - \mu/(G_\ell^\top x-h_\ell) & r=j \\
\mu G_{\ell j} x_{r}/(G_\ell^\top x-h_\ell)^2 & r \neq j
\end{cases}
\end{aligned}
\end{equation*}
\end{proof}

\begin{restatable}{lemma}{delxh}
Consider the LP relaxation (\ref{eq:logbarrier_relaxation2}), defining $x^*$ as a function of $c, A, b, G$ and $h$.
Then, under this definition of $x^*$,
\begin{equation*}
     \Del{ x^*}{h} = (H^{-1} A^\top (A H^{-1} A^\top)^{-1} A H^{-1} - H^{-1}) f_{hx}(x,c,G,h)
\end{equation*}
where $H=f_{xx}$ is defined as in Lemma~\ref{lemma:term3G_computation} and 
\begin{equation*}
    f_{h_{\ell} x_{j}}(x,c,G,h) = -\mu G_{\ell j}/(G_\ell^\top x - h_\ell)^2
\end{equation*}
Note that when there are no equality constraints, i.e., $A=0$, we have
\begin{equation*}
     \Del{ x^*}{h} = - H^{-1} f_{hx}(x,c,G,h)
\end{equation*}
which is the same as the Lemma 2 in \cite{hu2022Packing}.
\label{lemma:term3h_computation}
\end{restatable}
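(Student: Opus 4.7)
The plan is to mirror the proof strategy used for Lemma \ref{lemma:term3G_computation}, since the only change is that we are now differentiating the KKT system with respect to the parameter $h$ (which appears in the log-barrier term of $f$) instead of $G$. Concretely, I would start from the Lagrangian $\mathbbm{L}(x,y;c,G,h) = f(x,c,G,h) + y^\top(b - Ax)$ and write down the KKT stationarity and primal feasibility conditions
\begin{equation*}
f_x(x,c,G,h) - A^\top y = 0, \qquad Ax - b = 0.
\end{equation*}

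Next, I would implicitly differentiate these equations with respect to $h$. Since $A$ and $b$ do not depend on $h$, the bottom block yields $A\,\partial x/\partial h = 0$, while the top block yields $f_{hx}(x,c,G,h) + f_{xx}(x,c,G,h)\,\partial x/\partial h - A^\top\,\partial y/\partial h = 0$. Stacking these gives the $2\times 2$ block system
\begin{equation*}
\begin{bmatrix} f_{hx}(x,c,G,h) \\ 0 \end{bmatrix} + \begin{bmatrix} f_{xx}(x,c,G,h) & -A^\top \\ A & 0 \end{bmatrix}\begin{bmatrix} \partial x/\partial h \\ \partial y/\partial h \end{bmatrix} = 0,
\end{equation*}
which is the exact analogue of the system appearing in the proof of Lemma \ref{lemma:term3G_computation}. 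Solving this system by standard block elimination (invert $H = f_{xx}$, substitute into the constraint $A\,\partial x/\partial h = 0$ to pin down $\partial y/\partial h$, then back-substitute) yields the claimed closed form
\begin{equation*}
\tfrac{\partial x^*}{\partial h} = \bigl(H^{-1} A^\top (A H^{-1} A^\top)^{-1} A H^{-1} - H^{-1}\bigr)\, f_{hx}(x,c,G,h),
\end{equation*}
and the $A=0$ case follows by dropping the projection term. I would note that $H$ is positive definite (the log-barrier makes $f$ strictly convex on its domain), so the inverses are well-defined; this is the only substantive ``regularity'' worry and it is already implicit in Lemma \ref{lemma:term3G_computation}.

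The only genuinely new calculation is the explicit form of $f_{hx}$. Starting from $f_{x_j}(x,c,G,h) = c_j - \mu x_j^{-1} - \mu \sum_{i=1}^q G_{ij}/(G_i^\top x - h_i)$, differentiating with respect to $h_\ell$ kills all terms in the sum except $i=\ell$, and $\partial/\partial h_\ell\bigl[1/(G_\ell^\top x - h_\ell)\bigr] = 1/(G_\ell^\top x - h_\ell)^2$, yielding $f_{h_\ell x_j}(x,c,G,h) = -\mu G_{\ell j}/(G_\ell^\top x - h_\ell)^2$ as stated. There is no real obstacle here: the hardest step is only bookkeeping (keeping track of the sign from $-h_\ell$ in the denominator), and everything else reuses the block-elimination argument already carried out for Lemma \ref{lemma:term3G_computation}.
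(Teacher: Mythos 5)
Your proposal is correct and follows essentially the same route as the paper's own proof: implicit differentiation of the KKT conditions of the Lagrangian with respect to $h$, the same $2\times 2$ block system, and the same block elimination to obtain the closed form, together with the direct computation of $f_{h_\ell x_j}$ from $f_{x_j}$. The added remark on positive definiteness of $H$ is a small (and welcome) regularity observation the paper leaves implicit.
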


\begin{proof}
As stated in the proof of Lemma \ref{lemma:term3G_computation}, using the Lagrangian relaxation and the Karush-Kuhn-Tucker (KKT) conditions, we obtain:
\begin{equation}
\begin{array}{r}
f_x(x,c,G,h)-A^{\top} y=0 \\
A x-b=0
\end{array}
\end{equation}
The implicit differentiation of these KKT conditions with respect to $h$ allows us to get the following system of equalities:
\begin{equation}
\left[\begin{array}{c}
f_{hx}(x,c,G,h)\\
0
\end{array}\right]+\left[\begin{array}{cc}
f_{x x}(x, c, G, h) & -A^{\top} \\
A & 0
\end{array}\right]\left[\begin{array}{l}
\frac{\partial x}{\partial h} \\
\frac{\partial y}{\partial h}
\end{array}\right]=0
\end{equation}

By solving this system of equalities, we can obtain 
\begin{equation*}
    \Del{ x^*}{h} = (H^{-1} A^\top (A H^{-1} A^\top)^{-1} A H^{-1} - H^{-1}) f_{hx}(x,c,G,h)
\end{equation*}
where $H=f_{xx}$ is defined as in Lemma \ref{lemma:term3G_computation}.
Since $f(x,c,G,h) = c^\top x - \mu \sum_{i=1}^d \ln(x_i) - \mu \sum_{i=1}^q \ln(G_i x - h_i)$, we have
\begin{equation*}
    f_{h_{\ell} x_{j}}(x) =  -\mu G_{\ell j}/(G_\ell^\top x - h_\ell)^2
\end{equation*}
\end{proof}

\begin{restatable}{lemma}{delxA}
Consider the LP relaxation (\ref{eq:logbarrier_relaxation2}), defining $x^*$ as a function of $c, A, b, G$ and $h$.
Then, under this definition of $x^*$,
\begin{equation*}
     \Del{ x^*}{A_{ij}} = H^{-1}(-A^\top (AH^{-1}A^\top)^{-1}(I_2 x+AH^{-1}I_1 y)+I_1 y)
\end{equation*}
where $I_1 = -\Del{A^\top}{A_{ij}}, I_2 =\Del{A}{A_{ij}}$, and $H=f_{xx}$ is defined as in Lemma \ref{lemma:term3G_computation}.
\label{lemma:term3A_computation}
\end{restatable}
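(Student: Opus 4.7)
The plan is to mimic the proofs of \Cref{lemma:term3G_computation} and \Cref{lemma:term3h_computation} by applying implicit differentiation to the KKT system of the relaxation~(\ref{eq:logbarrier_relaxation2}), the only difference being that the parameter of interest $A_{ij}$ appears in both $A$ and $A^{\top}$, which makes the book-keeping slightly more delicate. First I would form the Lagrangian
\begin{equation*}
\mathbbm{L}(x,y;A,b,c,G,h) = f(x,c,G,h) + y^{\top}(Ax - b)
\end{equation*}
and record the stationarity conditions $f_x(x,c,G,h) + A^{\top} y = 0$ and $Ax - b = 0$. These must hold at the optimum because Slater's condition holds for~(\ref{eq:logbarrier_relaxation2}).

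Next I would differentiate both conditions entrywise with respect to $A_{ij}$, noting that $b,c,G,h$ are independent of $A_{ij}$. Using $H = f_{xx}$ together with the definitions $I_1 = -\partial A^{\top}/\partial A_{ij}$ and $I_2 = \partial A/\partial A_{ij}$, this produces the block linear system
\begin{equation*}
\begin{bmatrix} H & A^{\top} \\ A & 0 \end{bmatrix}\begin{bmatrix} \partial x / \partial A_{ij} \\ \partial y / \partial A_{ij} \end{bmatrix} = \begin{bmatrix} I_1 y \\ -I_2 x \end{bmatrix}.
\end{equation*}
From here I would perform a Schur-complement style elimination: solve the top block for $\partial x/\partial A_{ij} = H^{-1}(I_1 y - A^{\top}\,\partial y/\partial A_{ij})$, plug this into the bottom block to obtain $(AH^{-1}A^{\top})\,\partial y/\partial A_{ij} = I_2 x + AH^{-1} I_1 y$, invert, and back-substitute. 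Collecting terms should give exactly the claimed expression.

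I expect the main obstacle to be sign bookkeeping rather than any deep technical step. Because $A_{ij}$ appears in both $A$ and $A^{\top}$, one must be careful that the Lagrangian convention (the sign in front of $y^{\top}(Ax-b)$) is chosen consistently with the definition $I_1 = -\partial A^{\top}/\partial A_{ij}$, otherwise the signs in front of the $I_1 y$ contributions in the final formula will be wrong. A secondary point to justify is that the Schur complement $AH^{-1}A^{\top}$ is invertible; this follows from strict convexity of the log-barrier objective on the interior (so $H \succ 0$) together with full row rank of $A$, which is inherited from the assumed existence of a primal–dual interior optimum for~(\ref{eq:logbarrier_relaxation2}). With these checks in place the computation is routine linear algebra parallel to the proofs of the previous two lemmas.
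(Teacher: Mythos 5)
Your proposal is correct and follows essentially the same route as the paper: implicit differentiation of the KKT conditions of (\ref{eq:logbarrier_relaxation2}) with respect to $A_{ij}$, followed by a Schur-complement elimination of $\partial y/\partial A_{ij}$ from the resulting block system. One remark on the sign issue you rightly flagged: the paper's own proof takes the Lagrangian $f + y^{\top}(b - Ax)$ (so $f_x - A^{\top}y = 0$), and carefully solving its displayed system under that convention yields the stated formula with the sign of both $I_1 y$ terms flipped (equivalently, with $y$ negated); your convention $f + y^{\top}(Ax - b)$ is the one that reproduces the lemma's formula verbatim, so your bookkeeping is the self-consistent choice, and your additional justification that $AH^{-1}A^{\top}$ is invertible is a detail the paper omits.
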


\begin{proof}
As stated in the proof of Lemma \ref{lemma:term3G_computation}, using the Lagrangian relaxation and the Karush-Kuhn-Tucker (KKT) conditions, we obtain:
\begin{equation}
\begin{array}{r}
f_x(x,c,G,h)-A^{\top} y=0 \\
A x-b=0
\end{array}
\end{equation}
Since $A \in \Real^{p \times d}$, fix $i \in \{1,\ldots,p\}, j \in \{1,\ldots,d\}$, the implicit differentiation of these KKT conditions with respect to $A_{ij}$ allows us to get the following system of equalities:
\begin{equation}
\left[\begin{array}{c}
-\frac{\partial A^{\top}}{\partial A_{ij}} y \\
\frac{\partial A}{\partial A_{ij}} x 
\end{array}\right]+\left[\begin{array}{cc}
f_{x x}(x, c,G,h) & -A^{\top} \\
A & 0
\end{array}\right]\left[\begin{array}{l}
\frac{\partial x}{\partial A_{ij}} \\
\frac{\partial y}{\partial A_{ij}}
\end{array}\right]=0
\end{equation}

Let $I_1 = -\Del{A^\top}{A_{ij}}, I_2 =\Del{A}{A_{ij}}$.
By solving this system of equalities, we can obtain 
\begin{equation*}
    \Del{ x^*}{A_{ij}} = H^{-1}(-A^\top (AH^{-1}A^\top)^{-1}(I_2 x+AH^{-1}I_1 y)+I_1 y)
\end{equation*}
where $H=f_{xx}$ is defined as in Lemma \ref{lemma:term3G_computation}.

\end{proof}

\begin{restatable}{lemma}{delxb}
Consider the LP relaxation (\ref{eq:logbarrier_relaxation2}), defining $x^*$ as a function of $c, A, b, G$ and $h$.
Then, under this definition of $x^*$,
\begin{equation*}
    \Del{ x^*}{b} = H^{-1} A^{\top} (A H^{-1} A^{\top})^{-1} I
\end{equation*}
where $H=f_{xx}$ is defined as in Lemma~\ref{lemma:term3G_computation}.
\label{lemma:term3b_computation}
\end{restatable}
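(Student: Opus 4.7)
The plan is to follow exactly the same Lagrangian/KKT template used in the proofs of Lemmas~\ref{lemma:term3G_computation}, \ref{lemma:term3h_computation}, and \ref{lemma:term3A_computation}, specialized to differentiation with respect to the equality constraint right-hand side $b$. The computation is actually simpler here than in the earlier lemmas because $f(x,c,G,h)$ has no explicit dependence on $b$, so the mixed partial $f_{bx}$ vanishes and only the constraint itself contributes a source term.

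First, I would invoke the Lagrangian
\begin{equation*}
    \mathbbm{L}(x,y;c,G,h,b) = f(x,c,G,h) + y^{\top}(b-Ax),
\end{equation*}
whose stationarity conditions (the KKT system) read
\begin{equation*}
    f_x(x,c,G,h) - A^{\top} y = 0, \qquad Ax - b = 0.
\end{equation*}
Next I would implicitly differentiate both equations with respect to $b$. The first KKT equation has no explicit $b$-dependence, contributing a zero source term, while differentiating $Ax - b = 0$ yields the source term $-I$. This produces the block linear system
\begin{equation*}
    \left[\begin{array}{c}
    0\\
    -I
    \end{array}\right]
    +
    \left[\begin{array}{cc}
    f_{xx}(x,c,G,h) & -A^{\top} \\
    A & 0
    \end{array}\right]
    \left[\begin{array}{c}
    \partial x/\partial b \\
    \partial y/\partial b
    \end{array}\right]
    = 0.
\end{equation*}

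Finally I would solve this block $2\times 2$ system by Schur complementation. From the first block row, $\partial x/\partial b = H^{-1} A^{\top}\, \partial y/\partial b$ with $H := f_{xx}$; substituting into the second row gives $A H^{-1} A^{\top}\, \partial y/\partial b = I$, hence $\partial y/\partial b = (A H^{-1} A^{\top})^{-1} I$, and back-substitution yields the claimed $\partial x^*/\partial b = H^{-1} A^{\top} (A H^{-1} A^{\top})^{-1} I$.

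There is essentially no serious obstacle here: the derivation is routine once the KKT system is set up, and the only thing that needs to be checked is that the Schur complement $A H^{-1} A^{\top}$ is invertible so that the implicit function theorem applies. This is guaranteed in our setting because the log-barrier terms make $H = f_{xx}$ strictly positive definite at any interior optimum, and $A$ is taken to have full row rank (as is standard for the interior-point formulation of Mandi and Guns~\cite{mandi2020interior}), so the KKT matrix is nonsingular and $\partial x^*/\partial b$ is well-defined.
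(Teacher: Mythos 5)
Your proposal is correct and follows essentially the same route as the paper: the same Lagrangian/KKT setup, the same implicitly differentiated block system with source term $\left[\begin{smallmatrix}0\\-I\end{smallmatrix}\right]$, and the same solution; you merely spell out the Schur-complement elimination and the invertibility caveat that the paper leaves implicit under ``by solving this system of equalities.''
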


\begin{proof}
As stated in the proof of Lemma \ref{lemma:term3G_computation}, using the Lagrangian relaxation and the Karush-Kuhn-Tucker (KKT) conditions, we obtain:
\begin{equation}
\begin{array}{r}
f_x(x,c,G,h)-A^{\top} y=0 \\
A x-b=0
\end{array}
\end{equation}
The implicit differentiation of these KKT conditions with respect to $b$ allows us to get the following system of equalities:
\begin{equation}
\left[\begin{array}{c}
0\\
-I
\end{array}\right]+\left[\begin{array}{cc}
f_{x x}(x, c,G,h) & -A^{\top} \\
A & 0
\end{array}\right]\left[\begin{array}{l}
\frac{\partial x}{\partial b} \\
\frac{\partial y}{\partial b}
\end{array}\right]=0
\end{equation}

By solving this system of equalities, we can obtain 
\begin{equation*}
    \Del{ x^*}{b} = H^{-1} A^{\top} (A H^{-1} A^{\top})^{-1} I
\end{equation*}
where $H=f_{xx}$ is defined as in Lemma~\ref{lemma:term3G_computation}.

\end{proof}

\section{Details for Case Studies}
\label{app:problems}

Since the penalty function partly or solely affects the terms $\left.\frac{\partial PReg(\hat{\theta},\theta)}{\partial {\color{black}x^*_2}}\right|_{{\color{black}x^*_1}}$, $\left.\frac{\partial PReg(\hat{\theta},\theta)}{\partial {\color{black}x^*_1}}\right|_{{\color{black}x^*_2}}$, and $\frac{\partial {\color{black}x^*_2}}{\partial {\color{black}x^*_1}}$, we give three case studies for our framework to show how to design the penalty function and compute gradients using the corresponding penalty function.

\subsection{Alloy Production Problem}
\label{app:alloy}

We first demonstrate, using the example of the alloy production problem, how our framework can tackle problems solvable by the prior work of Hu et al.~\cite{hu2022Packing}.
An alloy production factory needs to produce a certain amount of a particular alloy, requiring a mixture of $M$ kinds of metals. 
To that end, it must acquire at least $req_m$ tons of each of the $m \in [M]$ metals. 
The raw materials are to be obtained from $K$ suppliers, each supplying a different type of ore.
The factory plans to buy ores from sites and then \new{extract the metals themselves}. 
The ore supplied by site $k \in [K]$ contains a $con_{km} \in [0,1]$ fraction of material $m$ at a price of $cost_k$ per ton.
The \new{goal of the} factory \new{is} to meet its requirements for each metal at the minimum cost. 
However, the precise metal concentrations \new{(averaged in a batch)} are unknown before \new{the factory actually completes metal extraction}. 
The factory will estimate metal concentrations based on historical buying records, considering features such as the ore type, ore origin, \new{site-reported preliminary samples} and so on. 
Then the factory will decide how \new{much ore} to \new{order} from each site. 
This is the Stage 1 solution.
The {\color{black} Stage 1} OP is the alloy production problem using the estimated metal concentrations $\hat{con}$, and can be formulated as follows:
\begin{equation*}
{\color{black}x^*_1} = \argmin _{x} cost^{\top} x \quad \quad \text { s.t. } \hat{con}^{\top} x \geq req , \ x \geq 0
\end{equation*}

After the factory obtains \new{the} ores and completes \new{metal extraction}, i.e., \new{in} Stage 2, the precise metal concentrations\new{/amounts} are known. 
Since the purchased ores are already \new{processed}, the factory cannot return ores even if it has bought too \new{much}.
However, if the obtained metals do not satisfy the requirements, the factory can \new{post-hoc decide to last-minute order} more ores at a higher price, for example, $(1+ \sigma_k) cost_k$ per ton from the site $k$, where $\sigma_k \geq 0$ is a non-negative tunable scalar parameter.
In this scenario, the penalty function is:
\begin{equation}
    Pen({\color{black}x^*_1} \to x) = (\sigma \circ cost)^\top (x - {\color{black}x^*_1})
\label{eq:pen_alloy}
\end{equation}
where $\circ$ is the Hadamard/entrywise product.

With respect to the above penalty function, we are now ready to define the {\color{black} Stage 2} OP:
\begin{equation}
{\color{black}x^*_2} = \argmin _{x} cost^{\top} x + (\sigma \circ cost)^\top (x - {\color{black}x^*_1}) \quad \quad \text { s.t. } con^{\top} x \geq req , \ x \geq x^*_1
\label{eq:alloy_prod_s2_op}
\end{equation}
Note that since the precise metal concentrations $con$ are revealed, the true concentrations are used as the problem parameters instead of the estimated concentrations.
The final amount of ores bought from each site, including the ores bought in both Stage 1 and Stage 2, is the Stage 2 solution.

\new{The above formulation is based on the ``soft commitment" modelling approach discussed in \Cref{app:recourse}.}

The post-hoc regret for the alloy production problem can be \new{explicitly} written as:
\begin{equation}
    PReg(\hat{\theta},\theta) = cost^{\top} {\color{black}x^*_2} + (\sigma \circ cost)^\top ({\color{black}x^*_2} - {\color{black}x^*_1}) - cost^{\top} x^*(con) 
\label{eq:PReg:alloy}
\end{equation}
where $x^*(con)$ is an optimal solution of the alloy production problem under the true concentrations $con$.
\new{We now show how to compute the relevant gradients as discussed in \Cref{sec:PPO4MILP} and \Cref{app:MILP}.}

Using Equation (\ref{eq:PReg:alloy}), it is straightforward to compute that the $i$-th item in vector $\left.\frac{\partial PReg(\hat{\theta},\theta)}{\partial {\color{black}x^*_2}}\right|_{{\color{black}x^*_1}}$ and the $i$-th item in vector $\left.\frac{\partial PReg(\hat{\theta},\theta)}{\partial {\color{black}x^*_1}}\right|_{{\color{black}x^*_2}}$:
$\left(\left.\frac{\partial PReg(\hat{\theta}, \theta)}{\partial {\color{black}x^*_2}}\right|_{{\color{black}x^*_1}}\right)_i= \left(1+\sigma_i\right) cost_i, \left(\left.\frac{\partial PReg(\hat{\theta}, \theta)}{\partial {\color{black}x^*_1}}\right|_{{\color{black}x^*_2}}\right)_i= -\sigma_i cost_i$.

Now we show how to compute the approximation of the remaining term, $\frac{\partial {\color{black}x^*_2}}{\partial {\color{black}x^*_1}}$.

\paragraph{Approximation $\frac{\partial {\color{black}x^*_2}}{\partial {\color{black}x^*_1}}$.}
We use the same interior-point LP solver to help compute the relevant derivatives.
First, the estimated parameters are fed into the LP solver to solve the {\color{black} Stage 1} OP to obtain the {\color{black} Stage 1} optimal solution ${\color{black}x^*_1}$ and the corresponding $\mu$, which are used to compute the term $\frac{\partial {\color{black}x^*_1}}{\partial \thetahat}$.
Then the {\color{black} Stage 1} optimal solution ${\color{black}x^*_1}$ and the true parameters are fed into the LP solver to solve the {\color{black} Stage 2} OP to obtain the {\color{black} Stage 2} optimal solution ${\color{black}x^*_2}$ and the corresponding $\mu$, which are used to compute the term $\frac{\partial {\color{black}x^*_2}}{\partial {\color{black}x^*_1}}$.
Consider the Stage 2 OP in program (\ref{eq:alloy_prod_s2_op}). 
It is clear that the Stage 2 OP is a MILP, with ${\color{black}x^*_2}$ in the objective and ${\color{black}x^*_1}$ in $h$ of the constraints.
Applying Lemma \ref{lemma:term3h_computation}, we can compute an approximate gradient of the $\Del{{\color{black}x^*_2}}{{\color{black}x^*_1}}$ term.

\subsection{Variant of 0-1 Knapsack}
\label{app:proxy_buyer}

The second example, \new{which we call the proxy buyer problem}, is a \new{variant of the} 0-1 knapsack problem.
The unknown parameters appear in both the objective and constraints.
\new{This problem, as we shall see, can be handled by our framework, but not by the prior approach by Hu et al.~\cite{hu2022Packing}, since the problem is inherently discrete and cannot be formulated as LPs.}

A \emph{proxy buyer} is a person who purchases goods for others possibly for a profit.
Consider a proxy buyer who \new{is from City A, with a very high cost of living, who regularly travels to City B with a much lower cost of living.}
\new{Given her regular travels, her friends in City A have asked her to help purchase everyday-life products, which are significantly cheaper in City B, yet the time and transportation cost from City A to City B makes it prohibitive for most people to just go to City B themselves.
The traveller commutes between City A and City B once every three months, and has a known and limited capacity $cap$ of goods she could carry and bring back}.
\new{Before each trip, her friends would make requests for things to buy.}
For simplicity, one request contains one item.
\new{If the buyer brings back the item as requested, her friends will pay her 20\% of the price-tag $p_i$ of each item $i$ as a courtesy-thankyou.
We denote this ``profit'' by $f_i$, i.e., $f_i = 20\% p_i$.
}

The buyer is popular, and many friends ask her for favours.
One day before the buyer leaves for City B, the buyer needs to decide \new{which of her friends' requests to accept, given the limited capacity, and inform them accordingly}.
The buyer wants to maximize the total amount \new{of courtesy-thankyou money she gets}, subject to the hard constraint of the limited suitcase capacity $cap$. 
However, the precise price $p_i$ of each item $i$ is unknown, due to the uncertainty of the price itself, the volatility of the exchange rate, and the uncertainty of the discount activities of the items. 
Thus, the ``profit'' $f_i$ of buying item $i$ is unknown.
In addition, the exact size $s_i$ of each item $i$ is also estimated. 
The buyer will estimate the profit, i.e., the prices, and the sizes based on \new{past} experiences, considering features such as time-of-year, holiday-or-not, brand and so on. 
The buyer will decide which requests to accept based on the estimation. 
This is the Stage 1 solution.
The Stage 1 OP is the proxy buyer problem using the estimated sizes $\hat{s}$ and estimated \new{profits} $\hat{f}$:
\begin{equation*}
{\color{black}x^*_1} = \argmax _{x} \hat{f}^{\top} x , \quad \quad \text { s.t. } \hat{s}^{\top} x \leq cap , \  x \in \{0,1\}
\end{equation*}



After the buyer arrives at City B, the buyer knows the precise price and size of each item. 
If \new{she cannot carry} all the accepted \new{requests}, \new{for example, if the packaging for certain items have changed since she last bought them}, \new{the buyer will necessarily need to drop some of these requests}. 
\new{The buyer usually feels bad about reneging on a promise to her friends, and treats her friends to a meal as an apology if the request cannot be fulfilled after she promised}. 
\new{For simplicity, we assume that the price of the apology-meal is linear in the profit of the dropped request, since more expensive items are considered ``more important'' requests.
Here, the linearity factor is independent of the request.}
\new{That is, if she drops item $i$, she has to spend $\sigma f_i$ amount of money, where $\sigma \geq 0$ is a non-negative tunable scalar parameter.}
In this scenario, the penalty function is:
\begin{equation}
    Pen({\color{black}x^*_1} \to x) = \sigma  f^\top ({\color{black}x^*_1} - x)
\label{eq:pen_01KS}
\end{equation}

We are now ready to define the {\color{black} Stage 2} OP with respect to the above penalty function:
\begin{equation}
{\color{black}x^*_2} = \argmax _{x} f^{\top} x - \sigma  f^\top ({\color{black}x^*_1} - x), \quad \quad \text { s.t. } s^{\top} x \leq cap , \ x \leq x^*_1 , \  x \in \{0,1\}
\label{eq:proxy_buyer_s2_op}
\end{equation}
\new{The requests that were finally filled, namely the items that were actually bought by the buyer and brought home to City A}, \new{forms} the Stage 2 solution. 


Then the simplified form of the post-hoc regret for the proxy buyer problem can be written as:
\begin{equation}
    PReg(\hat{\theta},\theta) = f^{\top} x^*(f,s) - f^{\top} {\color{black}x^*_2} + \sigma  f^\top ({\color{black}x^*_1} - {\color{black}x^*_2})
\label{eq:PReg:01KS}
\end{equation}
where $x^*(f,s)$ is an optimal solution of the proxy buyer problem under the true proxy fees $f$ and true sizes $s$.

Using Equation (\ref{eq:PReg:01KS}), it is straightforward to compute that the $i$-th item in vector $\left.\frac{\partial PReg(\hat{\theta},\theta)}{\partial {\color{black}x^*_2}}\right|_{{\color{black}x^*_1}}$ and the $i$-th item in vector $\left.\frac{\partial PReg(\hat{\theta},\theta)}{\partial {\color{black}x^*_1}}\right|_{{\color{black}x^*_2}}$:
$\left(\left.\frac{\partial PReg(\hat{\theta}, \theta)}{\partial {\color{black}x^*_2}}\right|_{{\color{black}x^*_1}}\right)_i= \left(-1-\sigma\right) f_i,
\left(\left.\frac{\partial PReg(\hat{\theta}, \theta)}{\partial {\color{black}x^*_1}}\right|_{{\color{black}x^*_2}}\right)_i= \sigma f_i$.

\paragraph{Approximation $\frac{\partial {\color{black}x^*_2}}{\partial {\color{black}x^*_1}}$.}
Similar to the computation in Section \ref{app:alloy}, we obtain the {\color{black} Stage 1} optimal solution ${\color{black}x^*_1}$, the {\color{black} Stage 2} optimal solution ${\color{black}x^*_2}$, and the corresponding $\mu$ from the interior-point LP compute the term $\frac{\partial {\color{black}x^*_2}}{\partial {\color{black}x^*_1}}$.
Consider the Stage 2 OP in program (\ref{eq:proxy_buyer_s2_op}), it is clear that the Stage 2 OP is a MILP, with ${\color{black}x^*_2}$ in the objective and ${\color{black}x^*_1}$ in $h$ of the constraints.
Applying Lemma \ref{lemma:term3h_computation}, we can compute an approximate gradient of the $\Del{{\color{black}x^*_2}}{{\color{black}x^*_1}}$ term.

\subsection{Nurse Scheduling Problem}
\label{app:nurse}
Our last example is the nurse scheduling problem (NSP), which can be handled by our framework but not by the \new{prior work of Hu et al.}~\cite{hu2022Packing} since it is neither a packing LP nor a covering LP.

Consider a large optometry center that needs to assign nurses to shifts per day to meet patients' needs.
Every \new{Monday} morning, the center collects the nurses' preferences for each shift of the \new{following} week. 
Since nurses may have their own activities \new{and errands} during unscheduled shifts, \new{they} want to be informed of their schedules as early as possible.
\new{After the preferences are collected, on Monday night}, the center \new{sets a preliminary shift} schedule for the \new{up}coming week based on the estimated number of patients for each shift.
Suppose there are $n$ nurses, $k$ days, and $s$ shifts per day, then the number of the total shifts is $t=k\times s$.
We formulate the decision variables as a \new{Boolean} vector $x \in \{0,1\}^{d}$, where $d=n \times k \times s$.
Let $P \in \{1,2,3,4\}^{d}$ represent \new{the value of each} nurse's preferences \new{for a particular shift (the higher the number the better)}, and $H \in \new{\Nat}^{t}$ represents the number of patients in each shift, which are unknown and need to be predicted.
Each nurse $i$ can serve $m_i$ patients in one shift.
The objective is to maximize the nurses' preferences under a set of constraints:
(1) \new{the} schedule must satisfy the patient \new{demand}, \new{under each shift}
(2) each nurse must be scheduled for exactly one shift each day
(3) no nurse may be scheduled to work a night shift followed immediately by a morning shift.
The {\color{black} Stage 1} OP is the NSP using the estimated number of patients $\hat{H}$:
\begin{equation*}
\begin{aligned}
{\color{black}x^*_1} = &\argmax _{x} P^{\top} x \\
&\text { s.t. } \sum_{i=0}^{n-1} m_i x_{it+j} \geq \hat{H}_{j} \quad \forall j \in \{ 0,...,t-1 \} \\
&\quad \quad \sum_{q=0}^{s-1} x_{it+sj+q} = 1 \quad \begin{array}{r}
                                                    \forall i=\{0, \ldots, n-1\}, \\
                                                    j=\{0, \ldots, k-1\}
                                                    \end{array} \\
&\quad \quad x_{it+sj+s-1} + x_{it+sj+s} \leq 1  \begin{array}{r}
                                                    \forall i=\{0, \ldots, n-1\}, \\
                                                    j=\{0, \ldots, k-2\}
                                                    \end{array} \\
&\quad \quad x \in \{0,1\}
\end{aligned}
\end{equation*}

To provide better service to patients, the optometry center has implemented an appointment system that requires patients to schedule an appointment in advance to receive medical care. 
Reservations for the upcoming week, from Monday to \new{Sunday}, close every Sunday evening.
At this point, the center knows the precise number of patients for each shift of the next week.
The center might adjust the \new{shift} schedule to satisfy the \new{actual} patient \new{demand} or \new{to} improve the \new{overall} \new{nurse} preferences.
\new{However, due to the late notice for schedule changes, the nurse's preference may become lower.
For example, if a nurse is rescheduled to a shift for which her original preference is 5, now her preference for this shift may become 4 due to the late notice.}
Besides, a nurse may be more unhappy to be changed to a low-preference shift.
In this scenario, since the nurses' preferences are in $\{ 1,2,3,4 \}$, the penalty function can be formulated as:
\begin{equation}
    Pen({\color{black}x^*_1} \to x) = \sum_{i=0}^{d-1} Pen({\color{black}x^*_1} \to x)_i
\label{eq:pen_NSP}
\end{equation}
where the $i$-th item in the penalty function is:
\begin{equation*}
Pen({\color{black}x^*_1} \to x)_i= 
    \begin{cases}
    \gamma_i (5-P_i)^2 (x_i - x^*_{1i}) & x_{i} \geq x^*_{1i} \\ 
    0    & x_{i}<x^*_{1i}
    \end{cases}
\end{equation*}
We are now ready to define the {\color{black} Stage 2} OP with respect to the above penalty function:
\begin{equation*}
\begin{aligned}
{\color{black}x^*_2} = &\argmax _{x} P^{\top} x - \sum_{i=0}^{d-1} Pen({\color{black}x^*_1} \to x)_i \\
&\text { s.t. } \sum_{i=0}^{n-1} m_i x_{it+j} \geq H_{j} \quad \forall j \in \{ 0,...,t-1 \} \\
&\quad \quad \sum_{q=0}^{s-1} x_{it+sj+q} = 1 \quad \begin{array}{r}
                                                    \forall i=\{0, \ldots, n-1\}, \\
                                                    j=\{0, \ldots, k-1\}
                                                    \end{array} \\
&\quad \quad x_{it+sj+s-1} + x_{it+sj+s} \leq 1  \begin{array}{r}
                                                    \forall i=\{0, \ldots, n-1\}, \\
                                                    j=\{0, \ldots, k-2\}
                                                    \end{array} \\
&\quad \quad x \in \{0,1\}
\end{aligned}
\end{equation*}

Then the simplified form of the post-hoc regret for the NSP can be written as:
\begin{equation}
    PReg(\hat{\theta},\theta) = P^{\top} x^*(H) - P^{\top} {\color{black}x^*_2} + \sum_{i=0}^{d-1} Pen({\color{black}x^*_1} \to {\color{black}x^*_2})_i
\label{eq:PReg:NSP}
\end{equation}

Using Equation (\ref{eq:PReg:NSP}), it is straightforward to compute that the $i$-th item in vector $\left.\frac{\partial PReg(\hat{\theta},\theta)}{\partial {\color{black}x^*_2}}\right|_{{\color{black}x^*_1}}$ and the $i$-th item in vector $\left.\frac{\partial PReg(\hat{\theta},\theta)}{\partial {\color{black}x^*_1}}\right|_{{\color{black}x^*_2}}$:
\begin{equation*}
    \left(\left.\frac{\partial PReg(\hat{\theta}, \theta)}{\partial {\color{black}x^*_2}}\right|_{{\color{black}x^*_1}}\right)_i= \begin{cases}
        -P_i + 2\gamma_i(5-P_i) & x^*_{2i} \geq x^*_{1i} \\ 
        -P_i & x^*_{2i}<x^*_{1i}
    \end{cases}
\end{equation*}
\begin{equation*}
    \left(\left.\frac{\partial PReg(\hat{\theta}, \theta)}{\partial {\color{black}x^*_1}}\right|_{{\color{black}x^*_2}}\right)_i= \begin{cases}
        -2\gamma_i(5-P_i) & x^*_{2i} \geq x^*_{1i} \\ 
        0 & x^*_{2i}<x^*_{1i}
    \end{cases}
\end{equation*}

\paragraph{Approximation $\frac{\partial {\color{black}x^*_2}}{\partial {\color{black}x^*_1}}$.}
Similar to the computation in Section \ref{app:alloy}, we obtain the {\color{black} Stage 1} optimal solution ${\color{black}x^*_1}$, the {\color{black} Stage 2} optimal solution ${\color{black}x^*_2}$, and the corresponding $\mu$ from the interior-point LP compute the term $\frac{\partial {\color{black}x^*_2}}{\partial {\color{black}x^*_1}}$.
Using the penalty function in Equation (\ref{eq:pen_NSP}), the {\color{black} Stage 2} OP can be formulated as a MILP by adding new variables $\sigma$ and one more constraint:
\begin{equation*}
\begin{aligned}
{\color{black}x^*_2} = &\argmax _{x} P^{\top} x - \sum_{i=0}^{d-1} \gamma_i (5-P_i)^2 \sigma_i \\
&\text { s.t. } \sum_{i=0}^{n-1} m_i x_{it+j} \geq H_{j} \quad \forall j \in \{ 0,...,t-1 \} \\
&\quad \quad \sum_{q=0}^{s-1} x_{it+sj+q} = 1 \quad \begin{array}{r}
                                                    \forall i=\{0, \ldots, n-1\}, \\
                                                    j=\{0, \ldots, k-1\}
                                                    \end{array} \\
&\quad \quad x_{it+sj+s-1} + x_{it+sj+s} \leq 1 \quad  \begin{array}{r}
                                                    \forall i=\{0, \ldots, n-1\}, \\
                                                    j=\{0, \ldots, k-2\}
                                                    \end{array} \\
&\quad \quad \sigma_i \geq x_i - x^*_{1i} \quad \forall i = \{ 0,\ldots,d-1 \} \\
&\quad \quad x \in \{0,1\} \\
&\quad \quad \sigma \in \{0,1\}
\end{aligned}
\end{equation*}
Suppose the {\color{black} Stage 2} OP of the NSP can be written as:
\begin{equation*}
\begin{aligned}
    {\color{black}x^*_2} = &\argmin _{x} -P^{\top} x + (\gamma \circ (5-P)^2)^\top \sigma \\
    &\text { s.t. }G_1 x \geq H \\
    &\quad \quad A x = b \\
    &\quad \quad G_2 x \geq -1 \\
    &\quad \quad \sigma - x \geq -{\color{black}x^*_1} \\
    &\quad \quad x, \sigma \in \{0,1\}
\end{aligned}
\end{equation*}
Then the standard form of the {\color{black} Stage 2} OP is:
\begin{equation*}
\begin{aligned}
    x'_{2} = &\argmin _{x'} c^{\top} x' \\
    &\text { s.t. }A'x'=b \\
    &\quad \quad Gx \geq h \\
    &\quad \quad x' \in \{ 0,1 \}
\end{aligned}
\end{equation*}
where
\begin{equation*}
\begin{array}{l}
    c \ = \left[-P \quad  \gamma \circ (5-P)^2\right] \in \Real^{2d}, \quad x' \ = \left[x \quad \sigma \right] \in \Real^{2d} \\
    G \ = \left[\begin{array}{c}
                G_1 \quad 0\\
                G_2 \quad 0\\
                -I \quad I
                \end{array}\right] \in \Real^{(t+nk-n+d) \times 2d}, \quad h \ = \left[ \begin{array}{c}
                H \\
                -1 \\
                -{\color{black}x^*_1}
                \end{array} \right]  \in \Real^{t+nk-n+d} \\
    A' \ = \left[\begin{array}{c}
                A \quad 0
                \end{array}\right] \in \Real^{nk \times 2d}
    \end{array}
\end{equation*}
and $b \in \Real^{nk}$ is an all-ones vector. 

It is clear that ${\color{black}x^*_2}$ is in the objective and ${\color{black}x^*_1}$ is in $h$ of the constraints.
Applying Lemma \ref{lemma:term3h_computation}, we can compute an approximate gradient of the $\Del{{\color{black}x^*_2}}{{\color{black}x^*_1}}$ term.

\newpage

\section{Hyperparameters for the Experiments}
\label{app:Hyperparameters}
{\color{black}The methods of $k$-NN, RF, NN, and IntOpt-C as well as 2S have hyperparameters, which we tune via cross-validation:
for $k$-NN, we try $k \in \lbrace 1,3,5 \rbrace$;
for RF, we try different numbers of trees in the forest $\{10, 50, 100 \}$;
for NN, IntOpt-C, and 2S, we treat the learning rate, epochs and weight decay as hyperparameters.}

Tables \ref{table:Hyperparameters_alloy}, \ref{table:Hyperparameters_knapsack}, and \ref{table:Hyperparameters_nsp} show the final hyperparameter choices for the three problems:\ 1) an alloy production problem, 2) the classic 0-1 knapsack problem, and 3) a nurse roster scheduling problem.


\begin{table}[ht]
\caption{Hyperparameters of the experiments on the alloy production problem.}
\label{table:Hyperparameters_alloy}
\vskip 0.1in
\centering
\resizebox{0.6\textwidth}{!}{
\begin{tabular}{|l|l|}
\hline
Model    & Hyperparameters                                               \\ \hline
Proposed & optimizer:\ optim.Adam; learning rate:\ $5 \times 10^{-7}$; $\mu=10^{-3}$; epochs=20 \\ \hline
$k$-NN      & k=5                                                           \\ \hline
RF       & n\_estimator=100                                              \\ \hline
NN      & optimizer:\ optim.Adam; learning rate:\ $10^{-3}$; epochs=20          \\ \hline
\end{tabular}}
\end{table}


\begin{table}[ht]
\caption{Hyperparameters of the experiments on the 0-1 knapsack problem.}
\label{table:Hyperparameters_knapsack}
\vskip 0.1in
\centering
\resizebox{0.6\textwidth}{!}{
\begin{tabular}{|l|l|}
\hline
Model    & Hyperparameters                                               \\ \hline
Proposed & optimizer:\ optim.Adam; learning rate:\ $10^{-7}$; $\mu=10^{-3}$; epochs=12 \\ \hline
$k$-NN      & k=5                                                           \\ \hline
RF       & n\_estimator=100                                              \\ \hline
NN      & optimizer:\ optim.Adam; learning rate:\ $10^{-3}$; epochs=12          \\ \hline
\end{tabular}}
\end{table}

\begin{table}[ht]
\caption{Hyperparameters of the experiments on the nurse scheduling problem.}
\label{table:Hyperparameters_nsp}
\vskip 0.1in
\centering
\resizebox{0.6\textwidth}{!}{
\begin{tabular}{|l|l|}
\hline
Model    & Hyperparameters                                               \\ \hline
Proposed & optimizer:\ optim.Adam; learning rate:\ $10^{-1}$; $\mu=10^{-3}$; epochs=8 \\ \hline
$k$-NN      & k=5                                                           \\ \hline
RF       & n\_estimator=100                                              \\ \hline
NN      & optimizer:\ optim.Adam; learning rate:\ $10^{-2}$; epochs=8          \\ \hline
\end{tabular}}
\end{table}

{\color{black}Ridge, $k$-NN, CART and RF are implemented using \textit{scikit-learn}~\cite{scikit-learn}. 
The neural network is implemented using \textit{PyTorch}~\cite{NEURIPS2019_9015}.
To compute the two stages of optimization at \emph{test time} for our method, and to compute the optimal solution of an (MI)LP under the true parameters, we use the MILP solver from \textit{Gurobi}~\cite{gurobi}.}

{\color{black}
\section{Comparisons of the 2S Method and the Prior Differentiation Methods}
\label{app:Add_exp_ks}
In this section, we compare the proposed method with prior works \cite{agrawal2019differentiable,amos2017optnet,wilder2019melding} that provide ways of differentiating through LPs or LPs with regularization.
We conduct comparisons with CvxpyLayer \cite{agrawal2019differentiable} but not OptNet \cite{amos2017optnet} or QPTL \cite{wilder2019melding}.
The reason is that the calculation method proposed in QPTL is LP+quadratic regularization using OptNet, and CvxpyLayer is just a conic extension to OptNet.
We compared CvxpyLayer \cite{agrawal2019differentiable} with a) no regularization, b) quadratic regularization and c) log-barrier (like our Section \ref{sec:PPO4MILP}/Appendix \ref{app:MILP}). 
The key indicator of its predictive performance is the type of regularization used, with the log-barrier version performing the best, but still slightly worse than our method. 
We applied CvxpyLayer \cite{agrawal2019differentiable} to the 0-1 knapsack benchmark to compare with our 2S method. 

Table \ref{table:add_ks_PReg} reports the mean post-hoc regrets and standard deviations across 10 runs and Table \ref{table:add_ks_runtime} reports the average training times. 
More precisely, we use it with various regularizations (a. LP with no regularization, b. with quadratic regularization, c. with log-barrier as in our paper) to replace the Section \ref{sec:PPO4MILP}/Appendix \ref{app:MILP} gradient calculations. 
We find that CvxpyLayer \cite{agrawal2019differentiable} never gives better solution quality while 2S is 30\%--50\% faster. 

\begin{table*}[h]
\caption{Mean post-hoc regrets and standard deviations of the 2S method and CvxpyLayer with different regularization on the 0-1 knapsack problem.}
\label{table:add_ks_PReg}
\vskip 0.05in
\centering
\resizebox{0.75\textwidth}{!}{
\begin{tabular}{l|l||c|c|c|c}
\hline
PReg                    & \makecell{Penalty\\factor}    & 2S                  & CvxpyLayer+log & CvxpyLayer+quad\_reg & CvxpyLayer+no\_reg         \\ \hline \hline
\multirow{3}{*}{cap=100} & 0.05              & \textbf{1.26±0.01} & \textbf{1.26±0.01}       & \textbf{1.27±0.01}      & 7.70±0.39         \\
                     & 0.25              & \textbf{6.28±0.05} & \textbf{6.28±0.05}       & 6.34±0.03      & 8.87±0.92            \\
                     & 0.5               & \textbf{9.22±0.10} & 9.47±0.31       & 9.96±0.54      & 10.13±0.46     \\ \hline
\multirow{3}{*}{cap=150} & 0.05              & \textbf{0.73±0.01} & \textbf{0.74±0.01}       & \textbf{0.75±0.03}      & 6.74±0.58      \\
                     & 0.25              & \textbf{3.64±0.04} & \textbf{3.64±0.04}       & 3.70±0.03      & 7.18±0.77       \\
                     & 0.5               & \textbf{7.27±0.06} & \textbf{7.28±0.08}       & 7.39±0.06      & 8.43±0.58     \\ \hline            
\end{tabular}}
\end{table*}

\begin{table*}[h]
\caption{Average runtime (in seconds) of the 2S method and CvxpyLayer with different regularization on the \new{0-1 knapsack} problem.}
\label{table:add_ks_runtime}
\vskip 0.05in
\centering
\resizebox{0.7\textwidth}{!}{
\begin{tabular}{|l|c|c|c|c|}
\hline
Runtime & 2S     & CvxpyLayer+log & CvxpyLayer+quad\_reg & CvxpyLayer+no\_reg  \\ \hline
cap=100 & 204.76 & 438.24          & 571.38         & 344.50             \\ \hline
cap=150 & 245.61 & 467.65          & 662.30         & 366.83             \\ \hline
\end{tabular}}
\end{table*}



\section{Frameworks Comparisons on the Alloy Production Problem}
\label{app:framework_comparison}
In this section, we further compared the proposed framework with the framework using the differentiable projection idea in \cite{chen2021enforcing} on the alloy production benchmark.
The idea in \cite{chen2021enforcing} is to use the $l_2$ projection, and we implemented it using CvxpyLayer.
The experiment set-up follows that of Table \ref{table:FrameworkCompare}:\ both training and testing use $l_2$ projection in the second stage, as opposed to solving the second stage optimization problem defined in Section~\ref{sec:framework}. 
Table \ref{table:3FrameworkCompare} shows both the post-hoc regret and training time for $l_2$ projection. 
We find that not only is $l_2$ projection slow, but it has even worse post-hoc regret than the Hu et al. correction \cite{hu2022Packing}. 
We suspect that this is due to the Hu et al. correction function \cite{hu2022Packing} preserving the direction of the solution vector whereas $l_2$ projection can change the direction, and that this makes a difference for Alloy Production. 
In any case, this experiment confirms again that our Two-Stage framework has better post-hoc regret than a framework based on differentiable projections, reinforcing the main message of our paper.

\begin{table}[h]
\caption{Comparison of three frameworks on the alloy production problem.}
\label{table:3FrameworkCompare}
\vskip 0.05in
\centering
\resizebox{1\textwidth}{!}{
\begin{tabular}{l||cccccc||c}
\hline
\multicolumn{1}{l||}{\multirow{2}{*}{\makecell{\\Penalty factor}}}  & \multicolumn{6}{c||}{PReg}                                                                                                                                                                                                                     & \multirow{2}{*}{\makecell{Average\\runtime}} \\ \cline{2-7}
\multicolumn{1}{c||}{}                                & \multicolumn{1}{c|}{0.25±0.015}          & \multicolumn{1}{c|}{0.5±0.015}           & \multicolumn{1}{c|}{1±0.015}             & \multicolumn{1}{c|}{2±0.015}              & \multicolumn{1}{c|}{4±0.015}              & 8±0.015              &                                  \\ \hline \hline
2SFramework                                           & \multicolumn{1}{c|}{\textbf{41.84±2.76}} & \multicolumn{1}{c|}{\textbf{57.92±4.15}} & \multicolumn{1}{c|}{\textbf{88.76±4.79}} & \multicolumn{1}{c|}{\textbf{123.98±6.66}} & \multicolumn{1}{c|}{\textbf{161.28±7.12}} & \textbf{192.58±8.75} & 268.22                                 \\ \hline
intOpt-CFramework                                     & \multicolumn{1}{c|}{68.16±6.26}          & \multicolumn{1}{c|}{82.91±5.45}          & \multicolumn{1}{c|}{107.64±6.85}         & \multicolumn{1}{c|}{150.47±12.99}         & \multicolumn{1}{c|}{178.69±10.09}         & 206.84±12.51         &  228.00                                \\ \hline
l2\_projection                                        & \multicolumn{1}{c|}{103.28±4.87}         & \multicolumn{1}{c|}{118.90±6.99}         & \multicolumn{1}{c|}{150.15±11.45}        & \multicolumn{1}{c|}{212.62±20.58}         & \multicolumn{1}{c|}{337.59±23.24}         & 562.41±34.29         & 442.97                           \\ \hline
\end{tabular}}
\end{table}

\section{Experiments on the 0-1 Knapsack Problem with Large Penalty Factors}
\label{app:large_pen_KS}
Table \ref{table:large_pen_KS} reports the mean post-hoc regrets and standard deviations across 10 runs for each approach on the 0-1 knapsack problem with large penalty factors (penalty factors $\geq 1$).
With more data, we can make further analysis of the performance of the proposed 2S method.
Observing Tables \ref{table:ks} and \ref{table:large_pen_KS}, we can see that the trend, in terms of the difference between 2S and other methods, first decreases, then increases, as the penalty factor increases.
The trend in Tables \ref{table:ks} and \ref{table:large_pen_KS} is identical to the trend in Table \ref{table:alloy_2SPPO}. 
We can explain this phenomenon as follows.

First, when the penalty factor is small, the rational behavior for the buyer is to just take every order, and only decide which orders to drop when the true parameters are revealed (at close to no cost). 
2S identifies and exploits this behavior for small penalties, while classic regression methods are agnostic to this possible tactic. 
Thus, the advantage of 2S compared to classic regression methods is large in the small penalty case.

Second, when the penalty factor is large, 2S will analogously learn to be conservative, such that the first stage solution likely remains feasible under the true parameters, in order to avoid the necessary (and high) penalty due to having to change to a feasible solution. 
Again, classic regression methods will be agnostic to this possible tactic, leading to a large advantage of 2S over the classic methods.

Table \ref{table:nsp} only has the increasing trend from the large penalty, since it is neither a covering nor a packing program, and so there is no analogous tactic/exploitation for the small penalty.

\begin{table*}[ht]
\caption{Mean post-hoc regrets and standard deviations for \new{0-1 knapsack} problem with large penalty factors using the Two-Stage Predict+Optimize framework.}
\label{table:large_pen_KS}
\vskip 0.1in
\centering
\resizebox{0.9\textwidth}{!}{
\begin{tabular}{l|l||c|c|c|c|c|c|c|c}
\hline
PReg                    & \makecell{Penalty\\factor}    & 2S         & CombOptNet         & Ridge       & $k$-NN        & CART        & RF          & NN          & TOV                          \\ \hline \hline
\multirow{3}{*}{cap=100} & 1              & \textbf{10.90±0.15} & 10.93±0.17 & 10.93±0.19 & 11.11±0.17 & 11.16±0.14 & 11.01±0.31 & 11.26±0.23 & \multirow{3}{*}{29.68±0.14} \\ 
                         & 2              & \textbf{12.31±0.16} & 12.45±0.25 & 12.48±0.20 & 12.49±0.21 & 13.77±0.26 & 12.60±0.39 & 12.78±0.30 &                             \\
                         & 4              & \textbf{14.54±0.15} & 15.66±0.47 & 15.57±0.25 & 15.68±0.39 & 19.01±0.56 & 15.77±0.62 & 15.84±0.50 &                             \\ \hline
\multirow{3}{*}{cap=150} & 1              & \textbf{10.23±0.12} & 10.22±0.18 & 10.46±0.23 & 10.40±0.18 & 10.46±0.19 & 10.49±0.21 & 10.86±0.30 & \multirow{3}{*}{40.23±0.19} \\
                         & 2              & \textbf{11.18±0.15} & 11.74±0.34 & 11.88±0.30 & 11.63±0.20 & 12.56±0.31 & 11.83±0.19 & 12.12±0.17 &                             \\
                         & 4              & \textbf{13.20±0.16} & 14.33±0.46 & 14.71±0.49 & 14.43±0.33 & 16.75±0.63 & 14.53±0.29 & 14.65±0.41 &                             \\ \hline
\multirow{3}{*}{cap=200} & 1              & \textbf{6.77±0.36}  & 15.30±0.28 & 7.67±0.18  & 7.51±0.27  & 7.71±0.20  & 7.67±0.16  & 8.00±0.65  & \multirow{3}{*}{48.13±0.24} \\
                         & 2              & \textbf{8.19±0.12}  & 15.39±0.16 & 8.84±0.22  & 8.69±0.26  & 9.24±0.30  & 8.80±0.20  & 8.97±0.37  &                             \\
                         & 4              & \textbf{9.71±0.35}  & 15.46±0.22 & 11.17±0.40 & 11.06±0.32 & 12.29±0.59 & 11.05±0.46 & 10.91±0.53 &                             \\ \hline
\multirow{3}{*}{cap=250} & 1              & \textbf{1.37±0.08}  & 20.69±0.20 & 3.08±0.19  & 2.94±0.16  & 3.17±0.17  & 3.05±0.25  & 3.28±0.96  & \multirow{3}{*}{53.43±0.26} \\
                         & 2              & \textbf{3.34±0.15}  & 20.78±0.20 & 3.80±0.20  & 3.73±0.15  & 3.94±0.20  & 3.79±0.26  & 3.89±0.58  &                             \\
                         & 4              & \textbf{4.46±0.09}  & 20.93±0.20 & 5.25±0.35  & 5.32±0.27  & 5.47±0.35  & 5.29±0.48  & 5.11±0.39  &                     \\ \hline            
\end{tabular}}
\end{table*}
}

\section{Runtimes for the Experiments}
\label{app:runtime}

In this paper, all models are trained with Intel(R) Xeon(R) CPU E5-2630 v2 @ 2.60GHz processors.
Table \ref{table:runtime} shows the average runtime across 10 simulations for different optimization problems.
Since the testing time of different approaches is quite similar, here, the runtime refers to only the training time of the prediction model and does not include the testing time.
At training time, only the proposed 2S method and IntOpt-C solve the LP. 
Training for the usual NN does not involve the LP at all, and so training is much faster (but gives worse results).

Since IntOpt-C cannot handle the variant of the 0-1 knapsack problem and the NSP, we only report the runtime of IntOpt-C for the alloy production problem.

{\color{black}Since the provided code of CombOptNet is only available for the 0-1 knapsack problem, we only report the runtime of CombOptNet for the 0-1 knapsack problem.
As Table \ref{table:runtime} shows, CombOptNet is drastically slower than the proposed 2S method.}

In the alloy production problem, the runtimes of the proposed 2S method are a little larger than that of IntOpt-C.
The reason is that 2S needs to solve two LPs when training while IntOpt-C only needs to solve one.
But in the alloy production problem, the unknown parameters are on the left hand side of the inequality constraints and the gradient computation includes matrix computation, which is also time-consuming.
Thus, the runtimes of the 2S method are larger but not twice as large as that of the IntOpt-C method.

In both the alloy production problem and the variant of the 0-1 knapsack problem, the runtimes of the 2S method are much better than RF.

The runtime of the 2S method is large in the NSP.
This is because we use the formulation where each decision variable corresponds to whether a specific nurse is assigned to a specific day and a specific shift.
Thus, the number of the decision variable of the relaxed LP is large and the LP takes more time to solve.

\begin{table}[ht]
\caption{Average runtime (in seconds) for the alloy production, 0-1 knapsack, and nurse scheduling problems.}
\label{table:runtime}
\vskip 0.1in
\centering
\resizebox{0.9\textwidth}{!}{
\begin{tabular}{|l|c|c|c|c|c|c|c|}
\hline
\multicolumn{1}{|c|}{\multirow{2}{*}{Runtime(s)}} & \multicolumn{2}{c|}{Alloy   production} & \multicolumn{4}{c|}{0-1   knapsack}                        & \multirow{2}{*}{Nurse scheduling} \\ \cline{2-7}
\multicolumn{1}{|c|}{}                            & Brass          & Titanium-alloy        & Capacity=100 & Capacity=150 & Capacity=200 & Capacity=250 &                                   \\ \hline
2S                                              & 268.22         & 394.53                & 204.76       & 245.61       & 202.65       & 193.46       & 537.32                            \\ \hline
IntOpt-C                                        & 228.00         & 331.38                & \multicolumn{5}{c|}{N\textbackslash{}A}                                                        \\ \hline
CompOptNet                                        & \multicolumn{2}{c|}{N\textbackslash{}A}                & 2341.40  & 2940.26  & 2394.05  & 2383.39  &  N\textbackslash{}A                                                       \\ \hline
Ridge                                           & 20.22          & 56.89                 & \multicolumn{4}{c|}{22.33}                                 & \textless{}1                      \\ \hline
$k$-NN                                             & 25.14          & 70.22                 & \multicolumn{4}{c|}{26.00}                                 & \textless{}1                      \\ \hline
CART                                            & 30.33          & 94.89                 & \multicolumn{4}{c|}{34.83}                                 & \textless{}1                      \\ \hline
RF                                              & 959.50         & 2552.25               & \multicolumn{4}{c|}{1034.07}                               & 2.11                              \\ \hline
NN                                              & 212.22         & 321.11                & \multicolumn{4}{c|}{135.80}                                & 11.39                            \\ \hline
\end{tabular}}
\end{table}

\end{document}